\def\eqref#1{equation~\ref{#1}}
\def\1{\bm{1}}
\DeclareMathAlphabet{\mathsfit}{\encodingdefault}{\sfdefault}{m}{sl}
\SetMathAlphabet{\mathsfit}{bold}{\encodingdefault}{\sfdefault}{bx}{n}
\newtcolorbox{exbox}[1]{colback=white!3!white,colframe=blue!55!black,
  boxrule=0.6pt,sharp corners,title={#1}}
\newcommand{\Strategy}[2]{\par\smallskip\noindent\textbf{\textit{(#1)}}\;#2\par}
\newcommand{\EquivNote}[1]{\par\smallskip\noindent\textbf{Equivalence annotation:} #1\par}
\title{Mitigating Strategy-Selection Bias in Reasoning for More Effective Test-Time Scaling}
\author{Zongqian Wu\textsuperscript{1}, Baoduo Xu\textsuperscript{2}, Tianyu Li\textsuperscript{1}, Zhu Sun\textsuperscript{5}, Xiaofeng Zhu\textsuperscript{4}\thanks{Corresponding authors.}, Lei Feng\textsuperscript{3}\footnotemark[1] \\
\textsuperscript{1}UESTC,
\textsuperscript{2}Nanjing University,
\textsuperscript{3}Southeast University,
\textsuperscript{4}Hainan University,
\textsuperscript{5}SUTD \\
}
\newtheorem{theorem}{Theorem}[section]
\newtheorem{lemma}[theorem]{Lemma}
\newtheorem{definition}[theorem]{Definition}
\newtheorem{assumption}[theorem]{Assumption}
\newtheorem{fact}[theorem]{Fact}
\newcommand{\eat}[1]{}
\newcommand{\eg}{{\em e.g., }}     
\newcommand{\ie}{{\em i.e., }}      
\begin{document}

\maketitle

\begin{abstract}
Test-time scaling (TTS) has been shown to improve the performance of large language models (LLMs) by sampling and aggregating diverse reasoning paths. However, existing research has overlooked a critical issue: \textit{selection bias of reasoning strategies during scaling}. 
Specifically, when generating reasoning processes, LLMs tend to follow certain strategies (\eg algebraic solutions for math problems) while neglecting other valid alternatives (\eg geometric solutions), resulting in insufficient exploration of the solution space. To further understand the impact of this bias, we present a theoretical analysis that reveals when it undermines the effectiveness of test-time scaling. Motivated by this theoretical insight, we introduce TTS-Uniform, a framework designed to mitigate the selection bias of reasoning strategies. It (i) identifies potential strategies, (ii) uniformly allocates the sampling budget across them, and (iii) filters out unstable strategies prior to aggregation. Experimental results show that TTS-Uniform significantly enhances scaling effectiveness across multiple mainstream LLMs and benchmark datasets. 
\textit{Code is available at https://github.com/zongqianwu/Uniform-TTS}.
\end{abstract}

\section{Introduction}

Chain-of-thought (CoT) \citep{wei2022chain, kojima2022large} enhances the reasoning capabilities of large language models (LLMs) by explicitly unfolding intermediate steps (\ie reasoning paths) before arriving at the final answer. Building on CoT, test-time scaling (TTS) \citep{zhang2025survey, ji2025test} further improves performance by sampling and aggregating diverse paths.
However, existing TTS research \citep{wang2022self,snell2024scaling} overlooks a critical limitation of CoT, which in turn constrains the effectiveness of scaling. Specifically, when tackling a problem, CoT reasoning tends to follow certain strategies while neglecting other valid alternatives. For example, a mathematical problem may be solvable through either an algebraic or a geometric method, yet LLMs often exhibit a strong preference for the algebraic approach. We refer to this phenomenon as \textbf{strategy-selection bias}. 
Such bias leads scaling to disproportionately concentrate its sampling on a narrow subset of preferred strategies, leaving large portions of the solution space unexplored and thus reducing the overall diversity of reasoning paths, as illustrated in Figures~\ref{fig:tsne_clu} and \ref{fig:bar_clu}.

To further examine the impact of strategy-selection bias on TTS, we conduct a theoretical analysis. We partition the available strategies for a given problem into two categories: low-complexity and high-complexity, according to the minimum number of tokens required for an LLM to solve the problem. The analysis reveals that when the skewed dominant strategies fall into the low-complexity category, such bias can unexpectedly improve performance by concentrating the sampling budget on simpler reasoning paths. In contrast, when high-complexity strategies dominate, their longer and less reliable paths diminish the performance gains that would otherwise be achieved through scaling.

To mitigate the scaling instability induced by strategy-selection bias, we propose a novel TTS framework, termed \textbf{TTS-Uniform}. This framework begins by extracting potential solution strategies for a given problem, either through coarse-grained (conceptual-level) or fine-grained (step-level) equivalence.
Each extracted strategy is then appended to the original problem text and fed into the LLM, thereby guiding the model to reason along the specified strategy.
On this basis, TTS-Uniform uniformly allocates the sampling budget across all strategies to ensure adequate exploration of the solution space. It then estimates the uncertainty of each strategy by calculating the entropy of the answers generated from their respective sampled reasoning paths. Strategies with high uncertainty are discarded, as they are likely to belong to high-complexity ones.
Finally, the answers from the remaining strategies are aggregated via majority voting to produce the final prediction.

The proposed TTS-Uniform maintains the benefits of scaling when simple reasoning strategies dominate, while dynamically reallocating the budget toward simpler reasoning paths when complex ones are prevalent, thereby effectively enhancing overall TTS performance.
The contributions are:

\begin{itemize}
\setlength{\itemindent}{-1em}
\item We identify and formalize strategy-selection bias in CoT reasoning, showing that LLMs favor a narrow set of strategies while neglecting other valid alternatives.

\item We partition reasoning strategies by their minimum token requirement and theoretically analyze how bias toward different categories affects test-time scaling.

\item We propose a novel TTS framework that extracts potential solution strategies, uniformly allocates the budget, and discards unstable ones to counteract strategy-selection bias.

\end{itemize}

\section{Strategy-Selection Bias in LLM Reasoning}
\label{rsb}

To rigorously quantify the phenomenon of strategy-selection bias, we first formalize the concept of distinct reasoning strategies by defining an equivalence relation over correct reasoning paths, thereby grouping semantically equivalent approaches into discrete strategy classes.

\begin{definition}[Reasoning Strategy Set]\label{def:rss}
Let $\mathcal{Q}$ be a problem, and let $\mathcal R$ denote the set of all correct reasoning paths that solve $\mathcal{Q}$.
Define an equivalence relation $\sim$ on $\mathcal R$. For $r,r'\in\mathcal{R}$, $r\sim r'$ if and only if $r$ and $r'$ have equivalent semantics.
Assume the quotient set $\mathcal R/\!\!\sim$ is finite and write its $m$ equivalence classes as:
$
\mathcal R_1,\dots,\mathcal R_m, [m]:=\{1,\dots,m\}.
$
We refer to each $\mathcal{R}_i$ as a \emph{reasoning strategy} for $\mathcal{Q}$, and define the \emph{reasoning strategy set} as:
\[
\mathcal S\;:=\;\{\mathcal R_1,\dots,\mathcal R_m\}.
\]
If a reasoning path $r$ belongs to the $i$-th class, we write $r\in\mathcal R_i$.
\end{definition}



Given Definition~\ref{def:rss}, once the potential reasoning strategies for a problem are identified, a natural question arises: \textit{How does an LLM distribute its sampling probability across these reasoning strategies?} In the absence of inherent preference, an ideal solver would sample approximately uniformly from $\mathcal{S}$, thereby ensuring full coverage of the solution space.

To investigate the above question, we conducted an experiment on mathematical problems that admit multiple valid solutions. Specifically, we randomly sampled from the AIME dataset and used GPT-4o-mini to generate 500 independent reasoning paths for each.
These paths were then analyzed using two complementary approaches: (i) encoding each path with a sentence encoder (MiniLM \citep{wang2020minilm}) to obtain embeddings and constructing the corresponding t-SNE visualization, and (ii) clustering the paths using another more powerful reasoning model (OpenAI-o3). 
Partial results are shown in Figures~\ref{fig:tsne_clu} and \ref{fig:bar_clu}, with additional results provided in Appendix.

We observed a consistent pattern across all samples: whether clustering was performed directly on the embeddings or through a more powerful reasoning model, the proportion of reasoning paths following the dominant strategy was consistently much larger than the combined proportion of all other strategies, often exceeding twice their total. This indicates that the distribution of reasoning strategies generated by GPT-4o-mini deviates significantly from a uniform distribution. We formally refer to this phenomenon as strategy-selection bias in LLM reasoning, defined as follows.


\begin{definition}[Strategy-Selection Bias]
\label{def:rsb}
Let $\mathcal{Q}$ be a problem whose reasoning strategy set is $\mathcal S\;:=\;\{\mathcal R_1,\dots,\mathcal R_m\}$,
as defined in Definition \ref{def:rss}. Input $\mathcal{Q}$ into an LLM yields a distribution over $\mathcal{R}_i$. Consequently, there's also an underlying distribution over $\mathcal{S}$ denoted as $p$:
$$
p(\mathcal R_{i}) \;:=\; \Pr_{\,r\sim\mathrm{LLM}(Q)}\!\bigl[r\in \mathcal R_{i}\bigr], \qquad i\in[m],
$$
where $p$ can be estimated through repeated querying of an LLM with $\mathcal{Q}$. The pair $(\mathcal{S}, p)$ is said to exhibit \emph{strategy-selection bias} in LLM reasoning if:
\[
D(p \parallel U) > \delta,
\]
where $U(\mathcal{R}_i) = \tfrac{1}{m}$ is the uniform reference distribution, $D(\cdot\parallel\cdot)$ is a statistical divergence measure (e.g., KL divergence), and $\delta > 0$ is a predefined significance threshold.
\end{definition}

\begin{figure}
    \begin{center}
        \centering
        \captionsetup{type=figure}
        \includegraphics[width=0.33\linewidth]{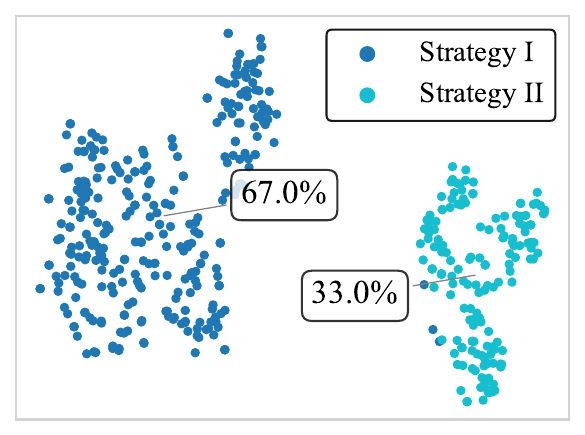}\hfill
        \includegraphics[width=0.33\linewidth]{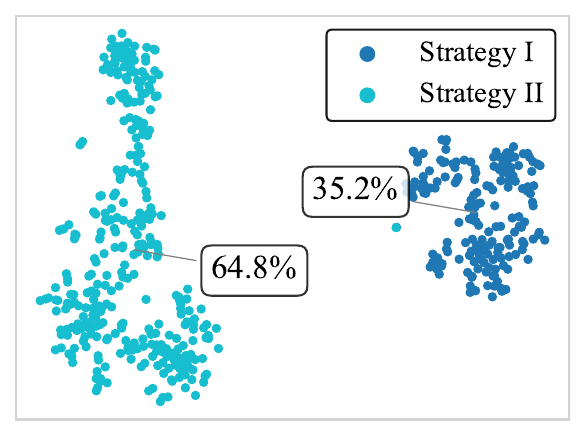}\hfill
        \includegraphics[width=0.33\linewidth]{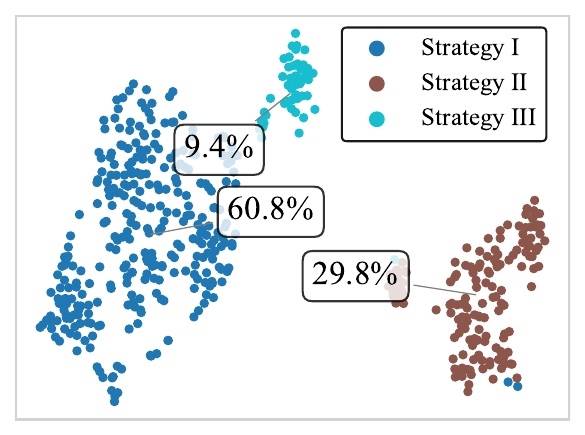}
                            \vspace{-4pt}
        \captionof{figure}{t-SNE visualization of embeddings (MiniLM encoder) for the 500 independent reasoning paths generated by GPT-4o-mini on three randomly selected AIME samples.}
        \label{fig:tsne_clu}
    \end{center}%
    \vspace{-10pt}
\end{figure}

\begin{figure}
    \begin{center}
        \centering
        \captionsetup{type=figure}
        
        \includegraphics[width=0.33\linewidth]{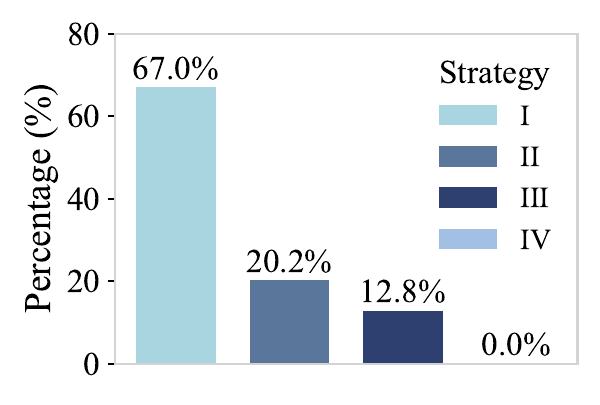}\hfill
        \includegraphics[width=0.33\linewidth]{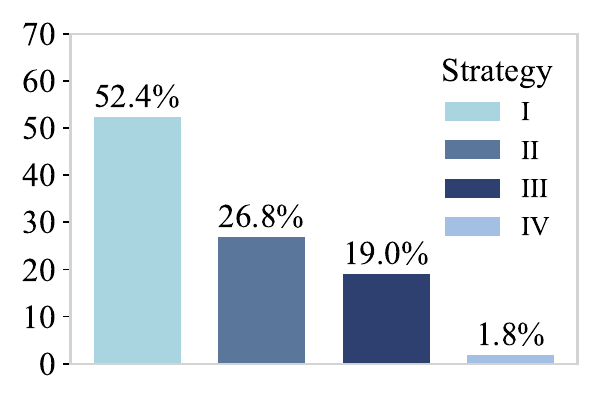}\hfill
        \includegraphics[width=0.33\linewidth]{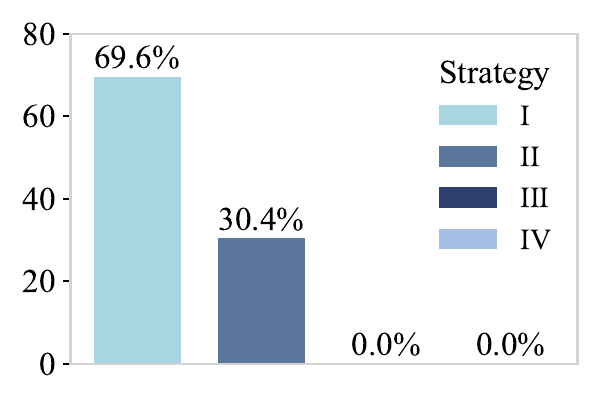}
                            \vspace{-4pt}
        \captionof{figure}{Percentage distribution of reasoning strategy clusters identified by OpenAI-o3 for the 500 independent reasoning paths generated by GPT-4o-mini on three randomly selected AIME samples.}
        \label{fig:bar_clu}
    \end{center}%
    \vspace{-12pt}
\end{figure}

\section{Effects of Strategy-Selection Bias on Test-Time Scaling}

As discussed in Section~\ref{rsb}, existing LLMs exhibit a pronounced skew toward a small subset of dominant reasoning strategies rather than sampling uniformly from the full strategy set $\mathcal{S}$. Such skewed distributions can undermine the effectiveness of the test-time scaling paradigm \citep{zhang2025survey}, which aims to maximize predictive performance under a fixed computational budget (\eg a limited number of sampled reasoning paths) during inference. To investigate this impact, we first introduce several foundational definitions and an assumption (Section~\ref{FDA}), and then build on this foundation to theoretically analyze how bias influences test-time scaling (Section~\ref{TA}).

\subsection{Complexity and Error Correlation in Reasoning Strategies}
\label{FDA}

We begin by introducing the minimum token requirement of a strategy as a measure of its complexity, which we then use to characterize the likelihood of errors when an LLM employs that strategy.

\begin{definition}[Minimum Token Requirement of the Strategy]
\label{def:mtr}
For a given problem $\mathcal{Q}$ and its associated reasoning strategy $\mathcal{R}_i$, the minimum token requirement, denoted $\ell_i$, is defined as the smallest number of tokens necessary to execute all essential reasoning steps required to derive the correct solution to $\mathcal{Q}$, without introducing any redundant information.
\end{definition}

\begin{assumption}[Complexity-Error Correlation]
\label{ass:complexity}
We assume that the minimum token requirement $\ell_i$ characterizes the complexity of executing strategy $\mathcal R_i$ for problem $\mathcal{Q}$. Higher complexity (i.e., a larger $\ell_i$) entails a higher probability of erroneous output when an LLM employs $\mathcal{R}_i$.
\end{assumption}

This assumption follows from the fact that reasoning in LLMs relies on the step-by-step generation of tokens, each carrying a non-zero error probability. As the required reasoning sequence lengthens ($\ell_i$ increases), these errors can amplify and accumulate, significantly increasing the likelihood of an incorrect final outcome.
This phenomenon is consistent with findings reported in prior studies \citep{zeng2025revisiting} and is further corroborated by our empirical results in Appendix~\ref{sec:ralationship}.

Leveraging Definition \ref{def:mtr} and Assumption \ref{ass:complexity}, we partition the reasoning strategy set $\mathcal{S}$ into two disjoint subsets according to reasoning complexity, as defined below.

\begin{definition}[Strategies of Low and High Complexity]
\label{def:lowhigh}
Given a problem $\mathcal{Q}$, an associated set of strategies $\mathcal{S} \;:=\; \{\mathcal R_1, \dots, \mathcal R_m\}$, and their corresponding complexities $\mathscr{L} = \{\ell_1, \dots, \ell_m\}$, let $\tau$ be a predefined complexity threshold. A reasoning strategy $\mathcal R_i \in \mathcal{S}$ is classified as a \emph{low-complexity strategy} for $\mathcal{Q}$ if $\ell_i < \tau$; otherwise $(\ell_i \geq \tau$), it is classified as a \emph{high-complexity strategy}.
\end{definition}


\subsection{Theoretical Analysis}
\label{TA}


After partitioning reasoning strategies into low- and high-complexity groups (Definition~\ref{def:lowhigh}), we investigate how the effectiveness of test-time scaling is influenced when high-frequency (dominant) strategies are biased toward one of these complexity categories.
To formalize this, we define the complexity skew of a strategy distribution in terms of how its probability mass is concentrated over strategies of varying complexity. In particular, we introduce a partial ordering over distributions to capture the extent to which a distribution favors simpler versus more complex strategies.

Let \( X \) and \( Y \) be random variables representing reasoning strategies, taking values in the reasoning strategy set \( \mathcal{S} \). Their respective distributions are denoted \( X \sim p_X \) and \( Y \sim p_Y \), where \( p_X(\mathcal{R}_i) = \Pr[X = \mathcal{R}_i] \) and \( p_Y(\mathcal{R}_i) = \Pr[Y = \mathcal{R}_i] \) for \( i \in [m] \). Since each strategy \( \mathcal{R}_i \) is uniquely associated with its index \( i \), we represent strategies by their indices, so that \( X \) and \( Y \) take values in \( [m]  \), with \( \Pr[X = i] = p_X(i) \) and \( \Pr[Y = i] = p_Y(i) \). 

Under Assumption~\ref{ass:complexity}, higher-complexity strategies are associated with higher error rates. Let \( \epsilon_i := \Pr(\text{LLM errs using } \mathcal{R}_i) \in [0, 1] \) denote the error rate of strategy \( \mathcal{R}_i \), and relabel the strategies such that \( \epsilon_1 \le \epsilon_2 \le \cdots \le \epsilon_m \). Hence, smaller indices correspond to strategies with lower complexity.

To compare the degree of skew between two distributions, we introduce the following ordering. Formally, we say that \( X \) exhibits a stronger preference for low-complexity strategies than \( Y \) if, for all \( x \in \mathbb{R} \), it holds that \( \Pr[X > x] \le \Pr[Y > x] \), denotes as \( X \preceq Y \). Intuitively, under this ordering, the smaller random variable concentrates more probability on low-complexity strategies. Since \( X \) and \( Y \) are discrete random variables taking values in \( [m] \), the condition \( \forall x, \Pr[X > x] \le \Pr[Y > x] \) is equivalent to \( \forall i \in [m-1], \sum_{j > i} p_X(j) \le \sum_{j > i} p_Y(j) \).

Next, we examine the expected error rate under each distribution. Define the error rate function \( \mathrm{err}: [m] \to [0, 1] \), where \( \mathrm{err}(i) = \epsilon_i \). Then, the expected error rates under distributions \( p_X \) and \( p_Y \) are \( \mathbb{E}_{X \sim p_X}[\mathrm{err}(X)] \) and \( \mathbb{E}_{Y \sim p_Y}[\mathrm{err}(Y)] \), respectively. The central question is whether the ordering \( X \preceq Y \) guarantees a lower expected error. This is established by the following result.
\begin{lemma}
\label{lemma:fosd_property}
\( X \preceq Y \) if and only if for any non-decreasing function \( u: \mathbb{R} \to \mathbb{R} \) such that \( \mathbb{E}[|u(X)|] < \infty \) and \( \mathbb{E}[|u(Y)|] < \infty \), we have $\mathbb{E}[u(X)] \le \mathbb{E}[u(Y)]$.
\end{lemma}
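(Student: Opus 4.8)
This is the classical characterization of first-order stochastic dominance, here specialized to random variables supported on the finite set $[m]$. Because the support is finite, the hypotheses $\mathbb{E}[|u(X)|]<\infty$ and $\mathbb{E}[|u(Y)|]<\infty$ are automatic, so I would not spend effort on them; the real content is the two implications, which I would prove separately. Throughout I would use the survival functions $\bar F_X(i):=\Pr[X>i]$ and $\bar F_Y(i):=\Pr[Y>i]$ for $i\in\{0,1,\dots,m\}$, noting $\bar F_X(0)=\bar F_Y(0)=1$, $\bar F_X(m)=\bar F_Y(m)=0$, and that (as already observed in the text) $X\preceq Y$ is equivalent to $\bar F_X(i)\le \bar F_Y(i)$ for all $i\in[m-1]$.

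For the converse direction, suppose $\mathbb{E}[u(X)]\le\mathbb{E}[u(Y)]$ holds for every non-decreasing $u$. Fix $x\in\mathbb{R}$ and apply the hypothesis to the single test function $u(t)=\mathbbm{1}[t>x]$, which is non-decreasing and bounded, hence admissible. Since $\mathbb{E}[u(X)]=\Pr[X>x]$ and $\mathbb{E}[u(Y)]=\Pr[Y>x]$, this immediately yields $\Pr[X>x]\le\Pr[Y>x]$ for all $x$, i.e., $X\preceq Y$.

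For the forward direction, assume $X\preceq Y$ and let $u$ be non-decreasing. I would write $\mathbb{E}[u(X)]=\sum_{i=1}^m u(i)\,p_X(i)$, substitute $p_X(i)=\bar F_X(i-1)-\bar F_X(i)$, and perform an Abel (summation-by-parts) rearrangement to obtain the identity
\[
\mathbb{E}[u(X)] \;=\; u(1)\;+\;\sum_{i=1}^{m-1}\bigl(u(i+1)-u(i)\bigr)\,\bar F_X(i),
\]
and likewise for $Y$. Subtracting gives
\[
\mathbb{E}[u(Y)]-\mathbb{E}[u(X)] \;=\; \sum_{i=1}^{m-1}\bigl(u(i+1)-u(i)\bigr)\bigl(\bar F_Y(i)-\bar F_X(i)\bigr),
\]
where every factor $u(i+1)-u(i)$ is nonnegative since $u$ is non-decreasing and every factor $\bar F_Y(i)-\bar F_X(i)$ is nonnegative by the dominance hypothesis. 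Hence the right-hand side is $\ge 0$, i.e., $\mathbb{E}[u(X)]\le\mathbb{E}[u(Y)]$.

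There is no substantial obstacle here — the result is standard — and the only place that needs care is the bookkeeping of the boundary terms in the summation by parts (the isolated $u(1)$ and the vanishing $u(m)\bar F(m)$ term). As an alternative to the Abel computation, I could instead give a one-line coupling proof: realize both variables on a common $U\sim\mathrm{Unif}[0,1]$ via the quantile transform, set $\tilde X=F_X^{-1}(U)$ and $\tilde Y=F_Y^{-1}(U)$, observe that $X\preceq Y$ forces $\tilde X\le\tilde Y$ pointwise, apply monotonicity of $u$, and take expectations.
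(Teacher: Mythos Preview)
Your proof is correct. Both directions are handled cleanly: the converse via the indicator test function is exactly right, and the forward direction via Abel summation is a valid computation with the boundary terms $u(1)\bar F_X(0)=u(1)$ and $u(m)\bar F_X(m)=0$ accounted for correctly.

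The approach, however, differs from the paper's own proof of this lemma. The paper proves the statement in full generality for real-valued random variables, not just those supported on the finite set $[m]$. It first establishes an auxiliary characterization of $X\preceq Y$ in terms of indicator functions of upper sets (handling both open and closed half-lines, which requires a continuity-of-measure argument), then uses the layer cake representation $u(x)=\int_0^\infty \mathds{1}_{\{u\ge t\}}(x)\,\mathrm{d}t$ together with Fubini--Tonelli to reduce the inequality for a general bounded-below $u$ to the upper-set case, and finally removes the lower-boundedness assumption via the monotone convergence theorem. Your argument instead specializes immediately to the finite-support setting relevant to the application and uses a direct Abel rearrangement; this is more elementary and entirely sufficient for the paper's purposes, and in fact the paper gives essentially the same Abel computation separately as a streamlined discrete proof of the downstream Theorem~\ref{theorem:lowhigh}. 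What the paper's route buys is generality (arbitrary real-valued $X,Y$ and $u$ not bounded below); what yours buys is transparency and no measure-theoretic machinery. The coupling argument you sketch at the end is a third standard route and is also valid.
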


Since the domain of the function \(\mathrm{err}(\cdot)\) can be extended to \(\mathbb{R}\) while preserving its non-decreasing property without altering the expected error,
Lemma~\ref{lemma:fosd_property} immediately yields the desired conclusion:
\begin{theorem}
\label{theorem:lowhigh}
If \( X \preceq Y \), then $\mathbb{E}_{X \sim p_X}[\mathrm{err}(X)] \le \mathbb{E}_{Y \sim p_Y}[\mathrm{err}(Y)]$.
\end{theorem}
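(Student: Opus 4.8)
The plan is to obtain Theorem~\ref{theorem:lowhigh} as an immediate corollary of Lemma~\ref{lemma:fosd_property}: it suffices to exhibit $\mathrm{err}$ (suitably extended to all of $\mathbb{R}$) as a non-decreasing function with finite expectation under both $p_X$ and $p_Y$, and then invoke the lemma with the hypothesis $X \preceq Y$. First I would record that, under the relabeling convention $\epsilon_1 \le \epsilon_2 \le \cdots \le \epsilon_m$ (which itself is justified by Assumption~\ref{ass:complexity}), the function $\mathrm{err}\colon [m] \to [0,1]$ given by $\mathrm{err}(i) = \epsilon_i$ is non-decreasing on its domain.

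Next I would make the domain extension precise. Define $u\colon \mathbb{R} \to \mathbb{R}$ by $u(x) = \epsilon_1$ for $x < 1$, $u(x) = \epsilon_{\lfloor x\rfloor}$ for $1 \le x \le m$, and $u(x) = \epsilon_m$ for $x > m$ (any non-decreasing interpolation/extrapolation would do). One checks directly that $u$ is non-decreasing on all of $\mathbb{R}$, using $\epsilon_1 \le \cdots \le \epsilon_m$ to handle the join points, and that $u(i) = \mathrm{err}(i)$ for every $i \in [m]$. Since $X$ and $Y$ take values in $[m]$, we have $u(X) = \mathrm{err}(X)$ and $u(Y) = \mathrm{err}(Y)$ surely, hence $\mathbb{E}[u(X)] = \mathbb{E}_{X\sim p_X}[\mathrm{err}(X)]$ and $\mathbb{E}[u(Y)] = \mathbb{E}_{Y\sim p_Y}[\mathrm{err}(Y)]$. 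Moreover $|u| \le 1$ everywhere, so the integrability hypotheses $\mathbb{E}[|u(X)|] < \infty$ and $\mathbb{E}[|u(Y)|] < \infty$ of Lemma~\ref{lemma:fosd_property} hold trivially (the supports are finite in any case).

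Finally, applying Lemma~\ref{lemma:fosd_property} to the non-decreasing function $u$ together with $X \preceq Y$ gives $\mathbb{E}[u(X)] \le \mathbb{E}[u(Y)]$, which is precisely $\mathbb{E}_{X\sim p_X}[\mathrm{err}(X)] \le \mathbb{E}_{Y\sim p_Y}[\mathrm{err}(Y)]$. The only point requiring any care — and it is minor — is verifying that the extension $u$ is genuinely monotone across its three pieces and that it leaves both expectations unchanged; this is exactly where the relabeling $\epsilon_1 \le \cdots \le \epsilon_m$, and through it Assumption~\ref{ass:complexity}, enters. Everything else is a direct citation of the lemma, so I would not reprove the lemma's first-order-stochastic-dominance characterization here.
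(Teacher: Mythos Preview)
Your proposal is correct and follows essentially the same approach as the paper: extend $\mathrm{err}$ to a non-decreasing step function on all of $\mathbb{R}$ that agrees with $\mathrm{err}$ on $[m]$, observe that the extension leaves both expectations unchanged because $X,Y$ are supported on $[m]$, and then apply Lemma~\ref{lemma:fosd_property}. Your explicit check of the integrability hypothesis via $|u|\le 1$ is a small nicety the paper omits, but otherwise the two arguments are the same.
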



Based on Theorem~\ref{theorem:lowhigh}\footnote{The proofs of Lemma \ref{lemma:fosd_property} and Theorem \ref{theorem:lowhigh} are provided in Appendix \ref{sec:proofs}.}, we derive the following key conclusions about the impact of strategy-selection bias on test-time scaling: (i) when the skewed dominant strategies belong to the low-complexity group, scaling is generally more effective compared to uniform strategy sampling; (ii) conversely, when these strategies are of high complexity, the effectiveness of scaling is reduced.
These conclusions are further corroborated by empirical results presented in Appendix \ref{sec:exper_theo}.

These findings motivate the development of a new framework specifically designed to enhance the effectiveness and reliability of test-time scaling, as introduced in the next section.

\begin{figure*}
    \begin{center}
    \includegraphics[width=0.93\linewidth]{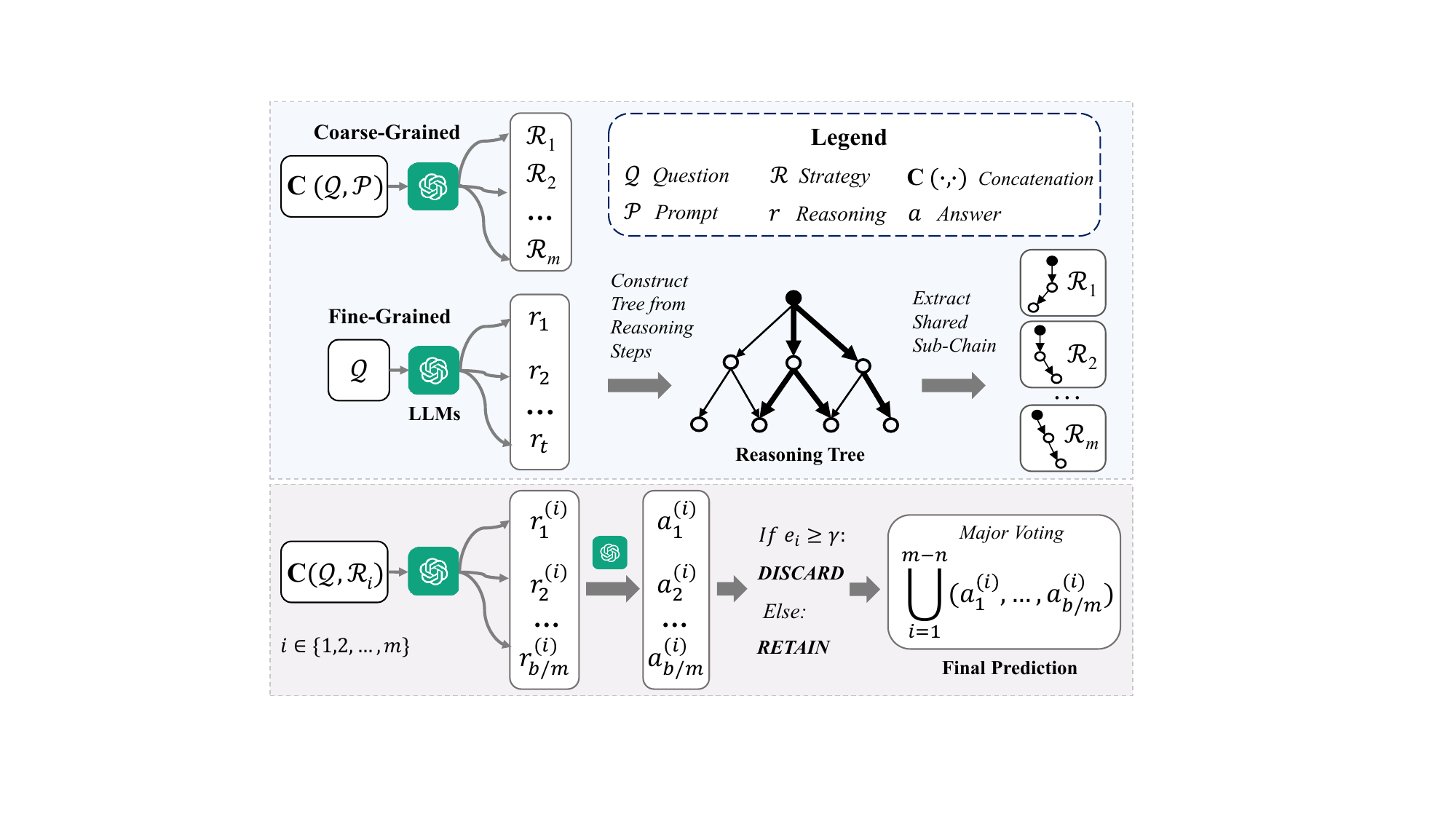}
    \end{center}
    \vspace{-6pt}
    \caption{
    The flowchart of the proposed TTS-Uniform consists of two main parts, \ie \textbf{Strategy Extraction} (light blue block) and \textbf{Strategy Aggregation} (light purple block). Strategy extraction is performed via coarse-grained equivalence (prompting the LLM to list high-level conceptual strategies) or fine-grained equivalence (constructing a reasoning tree from sampled steps and extracting shared sub-chains). Strategy aggregation allocates a uniform sampling budget across strategies, discards high-entropy (complex) ones, and applies majority voting to obtain the final prediction. 
    }
    \label{fig:overall_framework}
    \vspace{-8pt}
\end{figure*}

\section{Methodology}
\label{method}

We propose a novel approach, called TTS-Uniform (see Figure~\ref{fig:overall_framework}), to mitigate the instability of test-time scaling induced by strategy-selection bias. 
Specifically, we first extract potential solution strategies from the given problem to construct a reasoning strategy set (see Section \ref{extra}). Each strategy is then sequentially appended to the problem text and individually fed into the LLM, aiming to steer its reasoning along the specified strategy. To ensure balanced sampling, we allocate an equal number of samples to each strategy by dividing the total sampling budget by the number of strategies, thereby inducing a uniform sampling distribution over the strategy set. Finally, we identify and filter out reasoning paths that likely originate from high-complexity strategies, and aggregate the remaining paths to produce the final prediction (see Section \ref{aggre}).

\subsection{Strategy Extraction}
\label{extra}

Given a problem $\mathcal{Q}$ and an LLM employed to solve it, our objective is to leverage the model to extract all possible solution strategies. The primary challenge lies in establishing clear distinctions between strategies. In other words, it concerns how to characterize the semantic equivalence (see Definition \ref{def:rss}) between different reasoning paths within the same strategy.

To address this challenge, we propose two categories of equivalence relations to accommodate different distinction criteria, \ie coarse-grained equivalence and fine-grained equivalence.
Specifically, \textbf{\textit{coarse-grained equivalence}} categorizes strategies based on abstract, high-level perspectives. Under this equivalence class, reasoning paths of the same strategy may exhibit local variations in specific procedural aspects while maintaining a consistent underlying reasoning logic. 
Conversely, \textbf{\textit{fine-grained equivalence}} differentiates strategies through discrepancies in concrete reasoning steps. Within this class, distinct paths of the same strategy demonstrate high similarity at the step-execution level, with potential identity within acceptable tolerance thresholds.

\subsubsection{Coarse-Grained Equivalence}

To derive the reasoning strategy set based on coarse-grained equivalence, we design a prompt $\mathcal{P}$ that guides the LLM away from directly solving problem $\mathcal{Q}$. Instead, the prompt instructs the LLM to identify and summarize high-level reasoning strategies potentially applicable to $\mathcal{Q}$. The prompt is constructed as follows: \textit{\textbf{\# $\mathcal{P}$}: What are the abstract solution strategies for the above problem? Please identify all potential approaches at a conceptual level without providing computational details.} 

The problem $\mathcal{Q}$ is concatenated with $\mathcal{P}$ and fed into the LLM to generate the strategy set:
\begin{equation}
\{\mathcal R_1,\dots,\mathcal R_m\} =  \mathbf{Extract}(\mathbf{LLM}(\mathbf{Concat}(\mathcal{Q}, \mathcal{P}))), 
\label{eq1}
\end{equation}
where the $\mathbf{Concat}(\cdot)$ denotes the sequential concatenation of the specified text segments.
The $\mathbf{Extract}(\cdot)$ defines a rule for segmenting text from responses based on specified criteria.

In addition, this method can be naturally extended to a few-shot setting. Specifically, the prompt $\mathcal{P}$ in Eq. (\ref{eq1}) can be augmented with a set of carefully selected high-quality exemplars that demonstrate how to abstract solution strategies for similar problems. This enables the LLM to perform effective in-context learning and potentially yield more accurate and diverse strategy sets.

\subsubsection{Fine-Grained Equivalence}

Although reasoning strategies derived from coarse-grained equivalence are suitable for most datasets in practice, they become less effective when high-level conceptual categories are difficult to define or ambiguous. In such challenging cases, we resort to fine-grained equivalence, which operates at the level of individual reasoning steps, enabling strategy differentiation based on concrete procedural variations rather than abstract conceptual distinctions.

To achieve this,
we first allocate a small portion of the budget $t$ (where $t < b$) from the total computational budget $b$. We then draw $t$ independent samples from the LLM for the given problem $\mathcal{Q}$, thereby obtaining a set of exemplar reasoning paths:
\begin{equation}
\{r_1,\dots, r_t\} = \{\mathbf{LLM}(\mathcal{Q})_{j} \mid j = 1, 2, \dots, t \}, 
\label{eq2}
\end{equation}
where $\mathbf{LLM}(\cdot)_{j}$ denotes the $j$-th sampling result produced by the LLM.

Subsequently, we utilize the set $\{r_1, \dots, r_t\}$ to construct a reasoning tree, which characterizes the diverse reasoning strategies that may be employed to answer the problem $\mathcal{Q}$. Specifically, we treat $\mathcal{Q}$ as the root node and decompose each reasoning path $r_k = (s^{(k)}_1, s^{(k)}_2, \dots, s^{(k)}_{n_k})$, where $s^{(k)}_i$ denotes the $i$-th reasoning step in the $k$-th path, into a sequence of fine-grained logical steps. Each step is represented as a node, and the nodes are sequentially connected to form the path $\mathcal{Q} \to s^{(k)}_1 \to s^{(k)}_2 \to \cdots \to s^{(k)}_{n_k}$. When identical reasoning steps appear in different paths, they are merged into a single node in order to explicitly capture the structural commonalities across paths. Collectively, all reasoning paths form a multi-branch reasoning tree $\mathcal{T}_\mathcal{Q}$, which is formally defined as:
\begin{equation}
\mathcal{T}_\mathcal{Q} = \left\langle \mathcal{V}, \mathcal{E} \right\rangle,
\label{eq3}
\end{equation}
where the vertex set is defined as the set of all unique nodes in the reasoning tree:
\begin{equation}
\mathcal{V} = \{\mathcal{Q}\} \cup \bigcup_{k=1}^{t} \left\{ s^{(k)}_i \mid i = 1, \dots, n_k \right\},
\label{eq4}
\end{equation}
and the edge set is given by the set of directed links between consecutive nodes:
\begin{equation}
\mathcal{E} = \bigcup_{k=1}^{t} \left\{ (\mathcal{Q}, s^{(k)}_1), (s^{(k)}_1, s^{(k)}_2), \dots, (s^{(k)}_{n_k-1}, s^{(k)}_{n_k}) \right\}.
\label{eq5}
\end{equation}
Next, we further extract potential reasoning strategies $\{\mathcal R_1,\dots,\mathcal R_m\}$ from the constructed reasoning tree $\mathcal{T}_\mathcal{Q}$ in an automated manner. Specifically, this is achieved by identifying \textbf{\textit{shared sub-chains}} across different reasoning paths, enabling the aggregation of recurring logical patterns. Each shared sub-chain can be regarded as a potential reasoning strategy, representing recurring logical structures or patterns of thought that appear across multiple reasoning paths.


\subsection{Strategy Aggregation}
\label{aggre}

After obtaining the reasoning strategy set described on Section \ref{extra}, each strategy is appended to the original problem and subsequently fed into the LLM. This process is designed to encourage the LLM to adhere to the specified strategy during its reasoning.  For each strategy, we sample $b/m$ reasoning paths (or $(b - t)/m$ paths in the case of coarse-grained strategies) as follows:
\begin{equation}
\{r_1^{(i)},\dots, r_{b/m}^{(i)}\} = \{\mathbf{LLM}(\mathbf{Concat}(\mathcal{Q}, \mathcal R_i))_{j} \mid j = 1, 2, \dots, b/m \}, 
\label{eq6}
\end{equation}
where $r_k^{(i)}$ denotes the $k$-th reasoning path under the $i$-th strategy. 

To enable more effective aggregation of the reasoning paths obtained from Eq.~(\ref{eq6}), we prune those that are likely guided by high-complexity strategies, as such paths are more prone to errors.

To this end, we first estimate the complexity of each strategy. A straightforward approach is to use the average token length of the generated reasoning paths under each strategy as a proxy (see Definition \ref{def:mtr} and Assumption \ref{ass:complexity}). However, empirical observations (see Appendix \ref{sec:ideal}) reveal that this average often deviates significantly from the true minimum token requirement (\ie the intrinsic complexity) of the strategy. This discrepancy primarily arises because LLMs tend to introduce substantial repetition or redundancy in intermediate steps when generating reasoning paths, resulting in actual token consumption that far exceeds the minimal amount required to complete the reasoning. Consequently, estimates based on average length are generally unreliable.

To address the above issues, we propose a more realistic complexity estimation method.  
Intuitively, complex strategies are more likely to destabilize the reasoning process due to error accumulation and amplification, which results in higher uncertainty in the final predictions. Based on this insight, we introduce the information entropy of final predictions across sampled reasoning paths as a surrogate measure of strategy complexity\footnote{Appendix \ref{sec:ralationship} presents the relationship among strategy complexity (\ie minimum token requirement), entropy of answer distributions from paths guided by the strategy, and the proportion of correct answers.}. 
Specifically, for each strategy $\mathcal{R}_i$, we collect the final predictions generated from its sampled reasoning paths:
\begin{equation}
\{a_1^{(i)},\dots, a_{b/m}^{(i)}\} = \{\mathbf{LLM}(\mathbf{Concat}(\mathcal{Q}, \mathcal R_i, r_j^{(i)})) \mid j = 1,2, \dots, b/m \}.
\label{eq7}
\end{equation}
We then compute the entropy of the answer distribution for strategy $\mathcal{R}_i$:
\begin{equation}
e_i = \mathcal{H}(\mathcal R_i) = - \sum_{a \in \mathcal A_i} P_i(a) \log P_i(a),
\label{eq8}
\end{equation}
where $\mathcal{A}_i$ denotes the set of all final answers produced under $\mathcal R_i$, and $P_i(a)$ represents the empirical frequency of answer $a$.
A higher entropy value indicates greater uncertainty in the predictions, implying that the strategy is less stable and potentially more complex. Therefore, we discard the top $n$ strategies with the highest entropy values, yielding a filtered strategy set $\{\mathcal{R}_1, \dots, \mathcal{R}_{m-n}\}$.

Finally, the final prediction $\hat{a}$ for question $\mathcal{Q}$ is obtained by aggregating the predictions from all reasoning paths associated with the remaining strategies:
\begin{equation}
\hat{a} = \mathrm{MajorVote}\left(\bigcup_{i=1}^{m-n} {a_1^{(i)}, \dots, a_{b/m}^{(i)}} \right),
\label{eq9}
\end{equation}
where $\mathrm{MajorVote}(\cdot)$ denotes the majority voting function used to aggregate the answers.


Finally, TTS-Uniform mitigates test-time scaling instability via two mechanisms: (i) when simple strategies dominate, entropy filtering removes inadvertently introduced high-complexity paths, preserving scaling benefits; (ii) when complex strategies dominate, uniform sampling and entropy filtering jointly enhance the proportion of low-complexity paths, thereby improving performance.

\section{Experiments}

\begin{table*}[t]
\centering
\setlength{\tabcolsep}{4pt}
\scalebox{0.9}{
\begin{tabular}{llrrrrrrrr}
\toprule
 &  & \multicolumn{4}{c}{\textbf{GPT-4o-mini}} & \multicolumn{4}{c}{\textbf{GPT-4.1-mini}} \\
\cmidrule(lr){3-6}\cmidrule(lr){7-10}
\textbf{Dataset} & \textbf{Method} & \textbf{P@32} & \textbf{P@64} & \textbf{A@32} & \textbf{A@64} & \textbf{P@16} & \textbf{P@32} & \textbf{A@16} & \textbf{A@32} \\
\midrule
\multirow{6}{*}{\textbf{AQuA}}
& Self-Consistency & 95.3 & 96.1 & 89.0 & 89.0 & 94.9 & 96.5 & 88.6 & 88.6 \\
& PromptC-SC       & 96.9 & 97.6 & 89.0 & 89.3 & 96.9 & 97.6 & 89.8 & 88.6 \\
& ARI              & 94.0 & 94.0 & 89.3 & 89.3 & 91.7 & 91.7 & 90.6 & 90.6 \\
& Uniform-C (Z) &
96.1$_{\textcolor[rgb]{0.612,0.035,0.043}{0.8}}$ & 97.6$_{\textcolor[rgb]{0.612,0.035,0.043}{1.5}}$ & 89.0$_{\textcolor[rgb]{0.612,0.035,0.043}{0.0}}$ & 89.0$_{\textcolor[rgb]{0.612,0.035,0.043}{0.0}}$ &
96.5$_{\textcolor[rgb]{0.612,0.035,0.043}{1.6}}$ & \textbf{98.4}$_{\textcolor[rgb]{0.612,0.035,0.043}{1.9}}$ & 90.6$_{\textcolor[rgb]{0.612,0.035,0.043}{2.0}}$ & 90.6$_{\textcolor[rgb]{0.612,0.035,0.043}{2.0}}$ \\
& Uniform-C (F) &
\textbf{98.0}$_{\textcolor[rgb]{0.612,0.035,0.043}{2.7}}$ & \textbf{98.4}$_{\textcolor[rgb]{0.612,0.035,0.043}{2.3}}$ & 89.8$_{\textcolor[rgb]{0.612,0.035,0.043}{0.8}}$ & 89.8$_{\textcolor[rgb]{0.612,0.035,0.043}{0.8}}$ &
\textbf{96.9}$_{\textcolor[rgb]{0.612,0.035,0.043}{2.0}}$ & 97.6$_{\textcolor[rgb]{0.612,0.035,0.043}{1.1}}$ & 89.8$_{\textcolor[rgb]{0.612,0.035,0.043}{1.2}}$ & 90.2$_{\textcolor[rgb]{0.612,0.035,0.043}{1.6}}$ \\
& Uniform-F &
94.5$_{\textcolor[rgb]{0,0.5,0}{0.8}}$ & 97.2$_{\textcolor[rgb]{0.612,0.035,0.043}{1.1}}$ & \textbf{89.8}$_{\textcolor[rgb]{0.612,0.035,0.043}{0.8}}$ & \textbf{89.8}$_{\textcolor[rgb]{0.612,0.035,0.043}{0.8}}$ &
96.8$_{\textcolor[rgb]{0.612,0.035,0.043}{1.9}}$ & 97.2$_{\textcolor[rgb]{0.612,0.035,0.043}{0.7}}$ & \textbf{91.3}$_{\textcolor[rgb]{0.612,0.035,0.043}{2.7}}$ & \textbf{91.3}$_{\textcolor[rgb]{0.612,0.035,0.043}{2.7}}$ \\
\midrule
\multirow{6}{*}{\textbf{\makecell{AIME \\ (2024)}}}
& Self-Consistency & 33.3 & 43.3 & 10.0 & 10.0 & 83.3 & 86.6 & 60.0 & 60.0 \\
& PromptC-SC       & 46.6 & 56.6 & 13.3 & 13.3 & 83.3 & 86.6 & 60.0 & 63.3 \\
& ARI              & 26.6 & 30.0 & 13.3 & 20.0 & 70.0 & 70.0 & 63.3 & 63.3 \\
& Uniform-C (Z) &
40.0$_{\textcolor[rgb]{0.612,0.035,0.043}{6.7}}$ & \textbf{60.0}$_{\textcolor[rgb]{0.612,0.035,0.043}{16.7}}$ & 16.6$_{\textcolor[rgb]{0.612,0.035,0.043}{6.6}}$ & 20.0$_{\textcolor[rgb]{0.612,0.035,0.043}{10.0}}$ &
83.3$_{\textcolor[rgb]{0.612,0.035,0.043}{0.0}}$ & 83.3$_{\textcolor[rgb]{0,0.5,0}{3.3}}$ & 66.6$_{\textcolor[rgb]{0.612,0.035,0.043}{6.6}}$ & 66.6$_{\textcolor[rgb]{0.612,0.035,0.043}{6.6}}$ \\
& Uniform-C (F) &
40.0$_{\textcolor[rgb]{0.612,0.035,0.043}{6.7}}$ & 53.3$_{\textcolor[rgb]{0.612,0.035,0.043}{10.0}}$ & \textbf{20.0}$_{\textcolor[rgb]{0.612,0.035,0.043}{10.0}}$ & \textbf{20.0}$_{\textcolor[rgb]{0.612,0.035,0.043}{10.0}}$ &
76.6$_{\textcolor[rgb]{0,0.5,0}{6.7}}$ & \textbf{86.6}$_{\textcolor[rgb]{0.612,0.035,0.043}{0.0}}$ & 70.0$_{\textcolor[rgb]{0.612,0.035,0.043}{10.0}}$ & 70.0$_{\textcolor[rgb]{0.612,0.035,0.043}{10.0}}$ \\
& Uniform-F &
\textbf{50.0}$_{\textcolor[rgb]{0.612,0.035,0.043}{16.7}}$ & 50.0$_{\textcolor[rgb]{0.612,0.035,0.043}{6.7}}$ & 16.6$_{\textcolor[rgb]{0.612,0.035,0.043}{6.6}}$ & 16.6$_{\textcolor[rgb]{0.612,0.035,0.043}{6.6}}$ &
\textbf{83.3}$_{\textcolor[rgb]{0.612,0.035,0.043}{0.0}}$ & 83.3$_{\textcolor[rgb]{0,0.5,0}{3.3}}$ & \textbf{73.3}$_{\textcolor[rgb]{0.612,0.035,0.043}{13.3}}$ & \textbf{73.3}$_{\textcolor[rgb]{0.612,0.035,0.043}{13.3}}$ \\
\midrule
\multirow{6}{*}{\textbf{\makecell{AIME \\ (2025)}}}
& Self-Consistency & 23.3 & 23.3 & 3.3  & 3.3  & 73.3 & 73.3 & 53.3 & 56.6 \\
& PromptC-SC       & 33.3 & 33.3 & 13.3 & 13.3 & 73.3 & 76.6 & 60.0 & 60.0 \\
& ARI              & 23.3 & 23.3 & 13.3 & 13.3 & 73.3 & 76.6 & 66.6 & 66.6 \\
& Uniform-C (Z) &
33.3$_{\textcolor[rgb]{0.612,0.035,0.043}{10.0}}$ & 40.0$_{\textcolor[rgb]{0.612,0.035,0.043}{16.7}}$ & 16.6$_{\textcolor[rgb]{0.612,0.035,0.043}{13.3}}$ & 20.0$_{\textcolor[rgb]{0.612,0.035,0.043}{16.7}}$ &
80.0$_{\textcolor[rgb]{0.612,0.035,0.043}{6.7}}$ & \textbf{83.3}$_{\textcolor[rgb]{0.612,0.035,0.043}{10.0}}$ & 63.3$_{\textcolor[rgb]{0.612,0.035,0.043}{10.0}}$ & 63.3$_{\textcolor[rgb]{0.612,0.035,0.043}{6.7}}$ \\
& Uniform-C (F) &
36.6$_{\textcolor[rgb]{0.612,0.035,0.043}{13.3}}$ & \textbf{43.3}$_{\textcolor[rgb]{0.612,0.035,0.043}{20.0}}$ & \textbf{23.3}$_{\textcolor[rgb]{0.612,0.035,0.043}{20.0}}$ & \textbf{20.0}$_{\textcolor[rgb]{0.612,0.035,0.043}{16.7}}$ &
\textbf{80.0}$_{\textcolor[rgb]{0.612,0.035,0.043}{6.7}}$ & 80.0$_{\textcolor[rgb]{0.612,0.035,0.043}{6.7}}$ & \textbf{70.0}$_{\textcolor[rgb]{0.612,0.035,0.043}{16.7}}$ & \textbf{70.0}$_{\textcolor[rgb]{0.612,0.035,0.043}{13.4}}$ \\
& Uniform-F &
\textbf{36.6}$_{\textcolor[rgb]{0.612,0.035,0.043}{13.3}}$ & 40.0$_{\textcolor[rgb]{0.612,0.035,0.043}{16.7}}$ & 13.3$_{\textcolor[rgb]{0.612,0.035,0.043}{10.0}}$ & 20.0$_{\textcolor[rgb]{0.612,0.035,0.043}{16.7}}$ &
73.3$_{\textcolor[rgb]{0.612,0.035,0.043}{0.0}}$ & 80.0$_{\textcolor[rgb]{0.612,0.035,0.043}{6.7}}$ & 66.6$_{\textcolor[rgb]{0.612,0.035,0.043}{13.3}}$ & 66.6$_{\textcolor[rgb]{0.612,0.035,0.043}{10.0}}$ \\
\bottomrule
\end{tabular}}
\vspace{-4pt}
\caption{Performance of methods on three datasets using GPT-4o-mini and GPT-4.1-mini. P@ and A@ denote Pass@ and Acc@, respectively. Bold indicates the best result per dataset and model. Subscripts indicate improvements over Self-Consistency ({\textcolor[rgb]{0.612,0.035,0.043}{red}} = improvement, {\textcolor[rgb]{0,0.5,0}{green}} = decrease).}
\label{tab:main}
\vspace{-4pt}
\end{table*}

\begin{table}[t]
\centering
\small
\begin{tabular}{lcccccc}
\toprule
Model (Dataset) & \makecell{Self- \\ Consistency}   & \makecell{Uniform \\ -C (Z)} & \makecell{Uniform \\ -C (F)}  & \makecell{Uniform \\ -F}  & Avg. & \makecell{Model \\ Avg.} \\
\midrule
\textbf{GPT-4o-mini (AIME 2024)}   & 0.138 & 0.040 & 0.020 & 0.030 & 0.057 & \multirow{2}{*}{0.053} \\
\textbf{GPT-4o-mini (AIME 2025)}   & 0.112 & 0.020 & 0.012 & 0.053 & 0.049 &  \\
\midrule
\textbf{GPT-4.1-mini (AIME 2024)}  & 0.089 & 0.012 & 0.010 & 0.009 & 0.030 & \multirow{2}{*}{0.031} \\
\textbf{GPT-4.1-mini (AIME 2025)}  & 0.074 & 0.030 & 0.012 & 0.013 & 0.032 &  \\
\bottomrule
\end{tabular}
\vspace{-4pt}
\caption{Divergence between the sampled strategy distribution and the uniform distribution.}
\label{tab:divergence}
\vspace{-11pt}
\end{table}


Our proposed Uniform-TTS framework includes three variants: (i) Uniform-C (Z), a coarse-grained zero-shot prompting approach; (ii) Uniform-C (F), which adds few-shot exemplars to coarse-grained prompting; and (iii) Uniform-F, a fine-grained method that constructs a reasoning tree and extracts shared sub-chains to form strategies. We evaluate them on three reasoning benchmarks: AQuA \citep{ling2017program}, AIME 2024 \citep{aime2024}, and AIME 2025.


We adopt GPT-4o-mini (weaker) and GPT-4.1-mini (stronger) as base models, and assess performance with Pass@$k$ \citep{chen2021evaluating} and Acc@$k$. Pass@$k$ checks if at least one correct answer appears among $k$ samples, while Acc@$k$ measures majority-vote accuracy. We set $k=\{32,64\}$ for GPT-4o-mini and $k=\{16,32\}$ for GPT-4.1-mini.
For comparison, we include two parallel test-time scaling methods, Self-Consistency \citep{wang2022self} and PromptC-SC \citep{wu2025depth}, in which computations are performed independently, and one sequential method, ARI \citep{wu2025rethinking}, where subsequent computations build on earlier results.

\subsection{Main Results}
\label{main_results}

From Table \ref{tab:main}, we observe that all three variants of our proposed Uniform-TTS framework achieve stable and significant improvements over baseline methods across datasets and model settings. For example, on the AQuA dataset, Uniform-F attains the highest accuracies with GPT-4o-mini and GPT-4.1-mini (\ie Acc@64=89.8\% and Acc@32=91.3\%), substantially surpassing Self-Consistency. This demonstrates that, rather than repeatedly relying on a single dominant reasoning path, balanced strategy sampling can effectively enhance both accuracy and stability.

On the most challenging AIME datasets, the advantages of Uniform-TTS become even more pronounced. For instance, on AIME 2024 with GPT-4o-mini, our three variants outperform the three comparison methods by an average of 11.1\% in Pass@64 and 4.5\% in Acc@64. This indicates that more complex problems often admit a wider variety of solution strategies, and Uniform-TTS is able to exploit this diversity to achieve larger performance gains.
Moreover, Uniform-TTS provides strong benefits for weaker reasoning LLMs. For example, on AIME 2025 with GPT-4o-mini, Uniform-C (F) improves Acc@32 from 3.3\% (Self-Consistency) to 23.3\%, representing a 20.0\% absolute increase. For the stronger GPT-4.1-mini, the improvements are smaller in magnitude but remain consistent. For instance, Uniform-C (Z) increases Acc@32 from 56.6\% to 63.3\%. We will therefore further analyze this phenomenon in the next section.

        

\subsection{Bias Comparison Across Models with Different Reasoning Abilities}

To further investigate the phenomenon observed in Section \ref{main_results}, namely that weaker reasoning models benefit more from the our method than stronger ones, we conducted a detailed analysis of their bias levels. Specifically, for GPT-4o-mini and GPT-4.1-mini, we generated 100 independent reasoning paths per question across two datasets and evaluated four methods in total, including one baseline method (Self-Consistency) and three variants of our proposed TTS-Uniform. We then employed a more powerful reasoning model (OpenAI-o3) to cluster these reasoning paths and computed the frequency of each cluster. Based on these results, we calculated the divergence between the strategy sampling distribution and the uniform distribution, as defined in Definition \ref{def:rsb}, thereby quantifying the degree of bias. The experimental results are presented in Table~\ref{tab:divergence}.

The findings reveal that, on average across the two datasets, the divergence produced by Self-Consistency on GPT-4o-mini is 53.4\% higher than that on GPT-4.1-mini. This indicates that weaker reasoning models exhibit substantially stronger bias compared with their more capable counterparts. This phenomenon can be attributed to the fact that weaker models are more susceptible to the inherent bias in pre-training data, and the maximum likelihood training objective further reinforces such bias, causing them to replicate high-frequency solution patterns and rely heavily on dominant strategies. In contrast, stronger models, equipped with greater reasoning capacity and broader world knowledge, can leverage longer reasoning chains to escape these high-frequency paradigms and explore alternative valid strategies, thereby mitigating strategy-selection bias.


\begin{figure}
    \begin{center}
        \centering
        \captionsetup{type=figure}
        
        \includegraphics[width=0.24\linewidth]{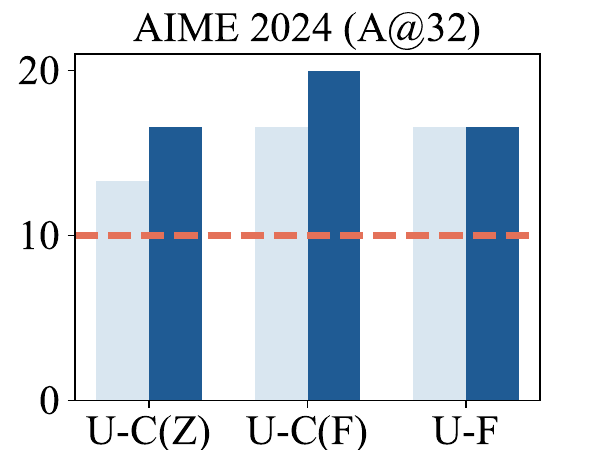}\hfill
        \includegraphics[width=0.24\linewidth]{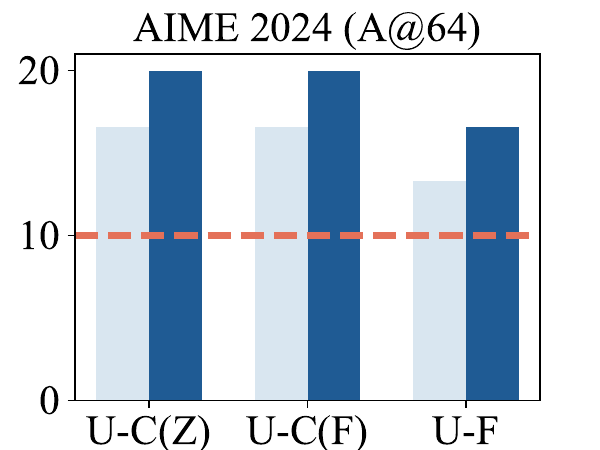}\hfill
        \includegraphics[width=0.24\linewidth]{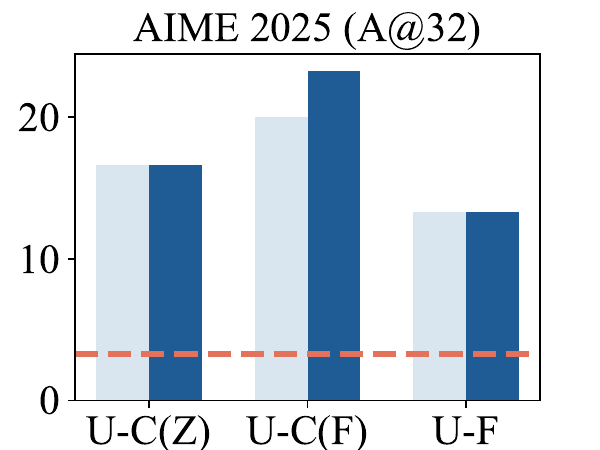}\hfill
        \includegraphics[width=0.24\linewidth]{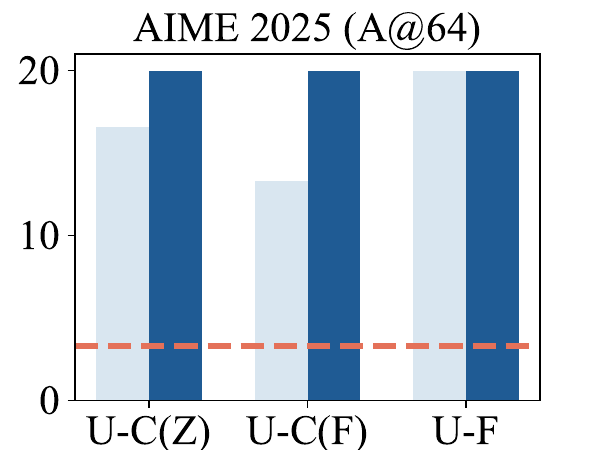}\hfill
                            \vspace{-2pt}
        \captionof{figure}{Ablation study of TTS-Uniform on AIME 2024 and AIME 2025 using GPT-4o-mini. The \textcolor[rgb]{0.894,0.431,0.337}{red dashed line} denotes Self-Consistency obtained by removing all modules. The \textcolor[rgb]{0.766,0.812,0.847}{light blue bars} correspond to strategy extraction with uniform sampling only (w/o entropy filtering), while the \textcolor[rgb]{0.122,0.357,0.580}{dark blue bars} correspond to the full model with all three modules, including entropy-based filtering. 
        }
        \label{fig:ablation}
    \end{center}%
    \vspace{-14pt}
\end{figure}

\subsection{Ablation Study}

To evaluate the contribution of each component in the proposed TTS-Uniform framework, we conduct ablation experiments on the AIME 2024 and AIME 2025 datasets using GPT-4o-mini. The framework consists of three key modules: strategy extraction, uniform sampling, and entropy-based filtering. We compare three configurations: (i) removing all modules, which degenerates into the self-consistency baseline \cite{wang2022self}; (ii) applying strategy extraction and uniform sampling but discarding entropy-based filtering; and (iii) keeping all modules intact.

From Figure~\ref{fig:ablation}, we observe that the self-consistency mehthod consistently yields the lowest performance across all settings, confirming that naively aggregating multiple reasoning paths without considering strategy diversity brings limited benefit. Incorporating strategy extraction with uniform sampling (case (ii)) yields notable improvements, highlighting the importance of allocating the sampling budget evenly across diverse strategies. Finally, adding entropy-based filtering (case (iii)) delivers further gains, particularly on AIME 2025 (\eg A@32 increases from 20.0\% to 23.3\% under Uniform-C (F)), demonstrating that discarding unstable or high-entropy strategies enhances robustness. Moreover, Uniform-C (F), which integrates all three modules, consistently achieves the strongest results, outperforming both the baseline and partial variants.


\section{Conclusion}

This paper highlights the often overlooked issue of strategy-selection bias in chain-of-thought reasoning during test-time scaling (TTS). Our theoretical and empirical findings suggest that the effectiveness of scaling critically depends on the relationship between bias and the complexity of reasoning paths. Beyond proposing TTS-Uniform as a practical remedy, these insights offer broader implications for future research. In particular, our findings suggest the urgent need for designing training and alignment procedures that cultivate balanced exploration of diverse reasoning strategies, rather than relying solely on test-time interventions. More generally, addressing strategy-selection bias may further open new directions for improving performance in other reasoning paradigms, such as multi-step planning, program synthesis, and multimodal inference.

\bibliography{iclr2026_conference}

\begin{thebibliography}{28}
\providecommand{\natexlab}[1]{#1}
\providecommand{\url}[1]{\texttt{#1}}
\expandafter\ifx\csname urlstyle\endcsname\relax
  \providecommand{\doi}[1]{doi: #1}\else
  \providecommand{\doi}{doi: \begingroup \urlstyle{rm}\Url}\fi

\bibitem[Chen et~al.(2021)Chen, Tworek, Jun, Yuan, Pinto, Kaplan, Edwards, Burda, Joseph, Brockman, et~al.]{chen2021evaluating}
Mark Chen, Jerry Tworek, Heewoo Jun, Qiming Yuan, Henrique Ponde De~Oliveira Pinto, Jared Kaplan, Harri Edwards, Yuri Burda, Nicholas Joseph, Greg Brockman, et~al.
\newblock Evaluating large language models trained on code.
\newblock \emph{arXiv preprint arXiv:2107.03374}, 2021.

\bibitem[Chen et~al.(2024)Chen, Qin, Zhang, Chen, Xu, and Che]{chen2024m}
Qiguang Chen, Libo Qin, Jin Zhang, Zhi Chen, Xiao Xu, and Wanxiang Che.
\newblock A novel benchmark for multi-domain multi-step multi-modal chain-of-thought.
\newblock \emph{arXiv preprint arXiv:2405.16473}, 2024.

\bibitem[Chen et~al.(2025)Chen, Qin, Liu, Peng, Guan, Wang, Hu, Zhou, Gao, and Che]{chen2025towards}
Qiguang Chen, Libo Qin, Jinhao Liu, Dengyun Peng, Jiannan Guan, Peng Wang, Mengkang Hu, Yuhang Zhou, Te~Gao, and Wanxiang Che.
\newblock Towards reasoning era: A survey of long chain-of-thought for reasoning large language models.
\newblock \emph{arXiv preprint arXiv:2503.09567}, 2025.

\bibitem[Chia et~al.(2023)Chia, Chen, Tuan, Poria, and Bing]{chia2023contrastive}
Yew~Ken Chia, Guizhen Chen, Luu~Anh Tuan, Soujanya Poria, and Lidong Bing.
\newblock Contrastive chain-of-thought prompting.
\newblock \emph{arXiv preprint arXiv:2311.09277}, 2023.

\bibitem[Durrett(2019)]{durrett2019probability}
Rick Durrett.
\newblock \emph{Probability: theory and examples}, volume~49.
\newblock Cambridge university press, 2019.

\bibitem[Ji et~al.(2025)Ji, Li, Ye, Wu, Yao, Xu, Mo, and Zhang]{ji2025test}
Yixin Ji, Juntao Li, Hai Ye, Kaixin Wu, Kai Yao, Jia Xu, Linjian Mo, and Min Zhang.
\newblock Test-time compute: from system-1 thinking to system-2 thinking.
\newblock \emph{arXiv preprint arXiv:2501.02497}, 2025.

\bibitem[Kojima et~al.(2022)Kojima, Gu, Reid, Matsuo, and Iwasawa]{kojima2022large}
Takeshi Kojima, Shixiang~Shane Gu, Machel Reid, Yutaka Matsuo, and Yusuke Iwasawa.
\newblock Large language models are zero-shot reasoners.
\newblock In \emph{NeurIPS}, volume~35, pp.\  22199--22213, 2022.

\bibitem[Lieb \& Loss(2001)Lieb and Loss]{lieb2001analysis}
Elliott~H Lieb and Michael Loss.
\newblock \emph{Analysis}, volume~14.
\newblock American Mathematical Soc., 2001.

\bibitem[Ling et~al.(2017)Ling, Yogatama, Dyer, and Blunsom]{ling2017program}
Wang Ling, Dani Yogatama, Chris Dyer, and Phil Blunsom.
\newblock Program induction by rationale generation: Learning to solve and explain algebraic word problems.
\newblock In \emph{ACL}, pp.\  158--167, 2017.

\bibitem[Liu et~al.(2025)Liu, Zhou, Xu, Huang, Wang, Zhang, Poon, and Chen]{liu2025metascale}
Qin Liu, Wenxuan Zhou, Nan Xu, James~Y Huang, Fei Wang, Sheng Zhang, Hoifung Poon, and Muhao Chen.
\newblock Metascale: Test-time scaling with evolving meta-thoughts.
\newblock \emph{arXiv preprint arXiv:2503.13447}, 2025.

\bibitem[MAA(2024)]{aime2024}
MAA.
\newblock American invitational mathematics examination - aime.
\newblock \url{https://huggingface.co/datasets/AI-MO/aimo-validation-aime}, February 2024.
\newblock AI-MO/aimo-validation-aime.

\bibitem[Mitra et~al.(2024)Mitra, Huang, Darrell, and Herzig]{mitra2024compositional}
Chancharik Mitra, Brandon Huang, Trevor Darrell, and Roei Herzig.
\newblock Compositional chain-of-thought prompting for large multimodal models.
\newblock In \emph{CVPR}, pp.\  14420--14431, 2024.

\bibitem[Muennighoff et~al.(2025)Muennighoff, Yang, Shi, Li, Fei-Fei, Hajishirzi, Zettlemoyer, Liang, Cand{\`e}s, and Hashimoto]{muennighoff2025s1}
Niklas Muennighoff, Zitong Yang, Weijia Shi, Xiang~Lisa Li, Li~Fei-Fei, Hannaneh Hajishirzi, Luke Zettlemoyer, Percy Liang, Emmanuel Cand{\`e}s, and Tatsunori Hashimoto.
\newblock s1: Simple test-time scaling.
\newblock \emph{arXiv preprint arXiv:2501.19393}, 2025.

\bibitem[Shao et~al.(2023)Shao, Gong, Shen, Huang, Duan, and Chen]{shao2023synthetic}
Zhihong Shao, Yeyun Gong, Yelong Shen, Minlie Huang, Nan Duan, and Weizhu Chen.
\newblock Synthetic prompting: Generating chain-of-thought demonstrations for large language models.
\newblock In \emph{ICML}, pp.\  30706--30775, 2023.

\bibitem[Shum et~al.(2023)Shum, Diao, and Zhang]{shum2023automatic}
KaShun Shum, Shizhe Diao, and Tong Zhang.
\newblock Automatic prompt augmentation and selection with chain-of-thought from labeled data.
\newblock \emph{arXiv preprint arXiv:2302.12822}, 2023.

\bibitem[Snell et~al.(2024)Snell, Lee, Xu, and Kumar]{snell2024scaling}
Charlie Snell, Jaehoon Lee, Kelvin Xu, and Aviral Kumar.
\newblock Scaling llm test-time compute optimally can be more effective than scaling model parameters.
\newblock \emph{arXiv preprint arXiv:2408.03314}, 2024.

\bibitem[Sprague et~al.(2024)Sprague, Yin, Rodriguez, Jiang, Wadhwa, Singhal, Zhao, Ye, Mahowald, and Durrett]{sprague2024cot}
Zayne Sprague, Fangcong Yin, Juan~Diego Rodriguez, Dongwei Jiang, Manya Wadhwa, Prasann Singhal, Xinyu Zhao, Xi~Ye, Kyle Mahowald, and Greg Durrett.
\newblock To cot or not to cot? chain-of-thought helps mainly on math and symbolic reasoning.
\newblock \emph{arXiv preprint arXiv:2409.12183}, 2024.

\bibitem[Wang et~al.(2025{\natexlab{a}})Wang, Zhu, Leong, Li, and Li]{wang2025scaling}
Jian Wang, Boyan Zhu, Chak~Tou Leong, Yongqi Li, and Wenjie Li.
\newblock Scaling over scaling: Exploring test-time scaling pareto in large reasoning models.
\newblock \emph{arXiv preprint arXiv:2505.20522}, 2025{\natexlab{a}}.

\bibitem[Wang et~al.(2020)Wang, Wei, Dong, Bao, Yang, and Zhou]{wang2020minilm}
Wenhui Wang, Furu Wei, Li~Dong, Hangbo Bao, Nan Yang, and Ming Zhou.
\newblock Minilm: Deep self-attention distillation for task-agnostic compression of pre-trained transformers.
\newblock In \emph{NeurIPS}, volume~33, pp.\  5776--5788, 2020.

\bibitem[Wang et~al.(2022)Wang, Wei, Schuurmans, Le, Chi, Narang, Chowdhery, and Zhou]{wang2022self}
Xuezhi Wang, Jason Wei, Dale Schuurmans, Quoc Le, Ed~Chi, Sharan Narang, Aakanksha Chowdhery, and Denny Zhou.
\newblock Self-consistency improves chain of thought reasoning in language models.
\newblock \emph{arXiv preprint arXiv:2203.11171}, 2022.

\bibitem[Wang et~al.(2025{\natexlab{b}})Wang, Wu, Zhang, Yan, Liu, Luo, and Fei]{wang2025multimodal}
Yaoting Wang, Shengqiong Wu, Yuecheng Zhang, Shuicheng Yan, Ziwei Liu, Jiebo Luo, and Hao Fei.
\newblock Multimodal chain-of-thought reasoning: A comprehensive survey.
\newblock \emph{arXiv preprint arXiv:2503.12605}, 2025{\natexlab{b}}.

\bibitem[Wei et~al.(2022)Wei, Wang, Schuurmans, Bosma, Xia, Chi, Le, Zhou, et~al.]{wei2022chain}
Jason Wei, Xuezhi Wang, Dale Schuurmans, Maarten Bosma, Fei Xia, Ed~Chi, Quoc~V Le, Denny Zhou, et~al.
\newblock Chain-of-thought prompting elicits reasoning in large language models.
\newblock In \emph{NeurIPS}, volume~35, pp.\  24824--24837, 2022.

\bibitem[Wu et~al.(2025{\natexlab{a}})Wu, Li, Xu, Yang, Zhan, Zhu, and Feng]{wu2025depth}
Zongqian Wu, Tianyu Li, Baoduo Xu, Jiaying Yang, Mengmeng Zhan, Xiaofeng Zhu, and Lei Feng.
\newblock Is depth all you need? an exploration of iterative reasoning in llms.
\newblock \emph{arXiv preprint arXiv:2502.10858}, 2025{\natexlab{a}}.

\bibitem[Wu et~al.(2025{\natexlab{b}})Wu, Xu, Cui, Zhan, Zhu, and Feng]{wu2025rethinking}
Zongqian Wu, Baoduo Xu, Ruochen Cui, Mengmeng Zhan, Xiaofeng Zhu, and Lei Feng.
\newblock Rethinking chain-of-thought from the perspective of self-training.
\newblock In \emph{ICML}, 2025{\natexlab{b}}.

\bibitem[Xia et~al.(2025)Xia, Qin, Li, Ma, Fan, Chern, Zou, Zhou, Hu, Jin, et~al.]{xia2025generative}
Shijie Xia, Yiwei Qin, Xuefeng Li, Yan Ma, Run-Ze Fan, Steffi Chern, Haoyang Zou, Fan Zhou, Xiangkun Hu, Jiahe Jin, et~al.
\newblock Generative ai act ii: Test time scaling drives cognition engineering.
\newblock \emph{arXiv preprint arXiv:2504.13828}, 2025.

\bibitem[Zeng et~al.(2025)Zeng, Cheng, Yin, Zhou, and Qiu]{zeng2025revisiting}
Zhiyuan Zeng, Qinyuan Cheng, Zhangyue Yin, Yunhua Zhou, and Xipeng Qiu.
\newblock Revisiting the test-time scaling of o1-like models: Do they truly possess test-time scaling capabilities?
\newblock \emph{arXiv preprint arXiv:2502.12215}, 2025.

\bibitem[Zhang et~al.(2025)Zhang, Lyu, Sun, Wang, Zhang, Hua, Wu, Guo, Wang, Muennighoff, et~al.]{zhang2025survey}
Qiyuan Zhang, Fuyuan Lyu, Zexu Sun, Lei Wang, Weixu Zhang, Wenyue Hua, Haolun Wu, Zhihan Guo, Yufei Wang, Niklas Muennighoff, et~al.
\newblock A survey on test-time scaling in large language models: What, how, where, and how well?
\newblock \emph{arXiv preprint arXiv:2503.24235}, 2025.

\bibitem[Zhang et~al.(2022)Zhang, Zhang, Li, and Smola]{zhang2022automatic}
Zhuosheng Zhang, Aston Zhang, Mu~Li, and Alex Smola.
\newblock Automatic chain of thought prompting in large language models.
\newblock \emph{arXiv preprint arXiv:2210.03493}, 2022.

\end{thebibliography}
\bibliographystyle{iclr2026_conference}

\newpage
\appendix

\section{Appendix}


\subsection{Omitted Proofs and Supplementary Definitions}
\label{sec:proofs}

\subsubsection{Proof of Lemma \ref{lemma:fosd_property} and Theorem \ref{theorem:lowhigh}} 
We introduce the notion of an \emph{upper set} to provide an alternative necessary and sufficient condition for $X \preceq Y$.

\begin{definition}[Upper Set]
A set $U \subseteq \mathbb{R}$ is called an \textbf{upper set} if, for any $x \in U$, every $y > x$ also belongs to $U$.
\end{definition}

In fact, every upper set must take the form of either $(a,+\infty)$ or $[a,+\infty)$ for some $a \in \mathbb{R}$.

\begin{lemma}\label{lemma:upper_set_aug_for_fosd}
$X \preceq Y$ if and only if $\mathbb{E}[\mathds{1}_U(X)] \le \mathbb{E}[\mathds{1}_U(Y)]$ for all upper sets $U$, where $\mathds{1}_U(\cdot)$ is the indicator function of $U$, i.e.,
$$
\mathds{1}_U(x)=
\begin{cases}
1, &x\in U,\\
0, &\text{otherwise}.
\end{cases}
$$
\end{lemma}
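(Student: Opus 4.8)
The plan is to prove Lemma~\ref{lemma:upper_set_aug_for_fosd} by establishing both implications directly from the characterization of upper sets noted just above, namely that every upper set is either $(a,+\infty)$ or $[a,+\infty)$ for some $a\in\mathbb{R}$ (together with the degenerate cases $\emptyset$ and $\mathbb{R}$, which are trivial since both sides of the claimed inequality are equal there).

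First I would prove the forward direction ($X\preceq Y \Rightarrow \mathbb{E}[\mathds{1}_U(X)]\le\mathbb{E}[\mathds{1}_U(Y)]$ for all upper sets $U$). Observe that $\mathbb{E}[\mathds{1}_U(X)] = \Pr[X\in U]$. If $U=(a,+\infty)$, then $\Pr[X\in U] = \Pr[X>a] \le \Pr[Y>a] = \Pr[Y\in U]$ by the definition of $\preceq$. The only subtlety is the closed case $U=[a,+\infty)$: here I would write $[a,+\infty) = \bigcap_{n\ge 1}(a-\tfrac1n,+\infty)$, so $\Pr[X\ge a] = \lim_{n\to\infty}\Pr[X>a-\tfrac1n]$ by continuity of measure from above (the events are nested decreasing and $\Pr[X>a-1]<\infty$). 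Since $\Pr[X>a-\tfrac1n]\le\Pr[Y>a-\tfrac1n]$ for every $n$, taking limits preserves the inequality, giving $\Pr[X\ge a]\le\Pr[Y\ge a]$. (In the discrete setting of the paper, where $X,Y$ take values in $[m]$, this is even more immediate and one could avoid the limiting argument entirely, but stating it for general real-valued variables costs nothing.)

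Next I would prove the reverse direction. Assume $\mathbb{E}[\mathds{1}_U(X)]\le\mathbb{E}[\mathds{1}_U(Y)]$ for all upper sets $U$. Fix an arbitrary $x\in\mathbb{R}$ and apply the hypothesis to the particular upper set $U=(x,+\infty)$: then $\Pr[X>x] = \mathbb{E}[\mathds{1}_{(x,+\infty)}(X)] \le \mathbb{E}[\mathds{1}_{(x,+\infty)}(Y)] = \Pr[Y>x]$. Since $x$ was arbitrary, this is exactly the definition of $X\preceq Y$, completing the equivalence.

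The proof is essentially routine; the only place requiring a moment's care is the closed-upper-set case in the forward direction, where one must justify passing from strict to non-strict tail inequalities via a continuity-of-measure argument — and even that disappears in the discrete regime actually used in the paper. I would keep the write-up short and then note that Lemma~\ref{lemma:upper_set_aug_for_fosd} serves as the bridge to Lemma~\ref{lemma:fosd_property}: a general non-decreasing $u$ with finite expectations can be approximated (monotone limit / layer-cake representation) by nonnegative combinations of indicators $\mathds{1}_U$ of upper sets, since $\{x: u(x)>c\}$ is always an upper set, so linearity and monotone convergence transfer the inequality from indicators of upper sets to $u$ itself; applying this with $u=\mathrm{err}$ then yields Theorem~\ref{theorem:lowhigh}.
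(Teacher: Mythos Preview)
Your proposal is correct and follows essentially the same route as the paper: both reduce the upper-set condition to the two tail inequalities $\Pr[X>z]\le\Pr[Y>z]$ and $\Pr[X\ge z]\le\Pr[Y\ge z]$, handle the open case directly from the definition of $\preceq$, and derive the closed case from the open one via continuity of measure applied to the nested intersection $[a,+\infty)=\bigcap_n(a-\tfrac1n,+\infty)$. Your explicit mention of the degenerate upper sets $\emptyset$ and $\mathbb{R}$ is a small nicety the paper glosses over, and your closing remarks on how the lemma feeds into Lemma~\ref{lemma:fosd_property} via the layer-cake representation accurately anticipate the paper's subsequent argument.
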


\begin{proof}
    Let $F_X(z)=\Pr[X\le z]$ and $F_Y(z)=\Pr[Y\le z]$ be the cumulative distribution functions (CDFs) of $X$ and $Y$, respectively.
    Since any upper set $U$ must be of the form $(z, +\infty)$ or $[z, +\infty)$ for some $z \in \mathbb{R}$, the condition of the lemma is equivalent to the following two inequalities holding for all $z \in \mathbb{R}$:
    \begin{enumerate}
        \item $\mathbb{E}[\mathds{1}_{(z, +\infty)}(X)] \le \mathbb{E}[\mathds{1}_{(z, +\infty)}(Y)]$
        \item $\mathbb{E}[\mathds{1}_{[z, +\infty)}(X)] \le \mathbb{E}[\mathds{1}_{[z, +\infty)}(Y)]$
    \end{enumerate}
    By definition, $X\preceq Y$ is equivalent to $\Pr[X>z]\le \Pr[Y>z]$ for all $z\in \mathbb{R}$. For any interval $U\subseteq \mathbb{R}$, we have
    $$
    \begin{aligned}
        \mathbb{E}[\mathds{1}_{U}(X)]
        &=\int_{-\infty}^{+\infty}\mathds{1}_{U}(x)\mathrm{d} F_X(x)\\
        &=\int_{U}\mathrm{d} F_X(x)\\
        &=\Pr[X\in U]
    \end{aligned}
    $$
    Thus, we have $\mathbb{E}[\mathds{1}_{(z, +\infty)}(X)] = \Pr[X>z]$ and $\mathbb{E}[\mathds{1}_{[z, +\infty)}(X)] = \Pr[X\ge z]$. Therefore, the lemma is equivalent to proving that the statement ``$\forall z\in \mathbb{R}, \Pr[X>z]\le \Pr[Y>z]$" is equivalent to the following two conditions holding simultaneously:
    \begin{itemize}
        \item[(i)] $\forall z \in \mathbb{R}, \Pr[X>z] \le \Pr[Y>z]$
        \item[(ii)] $\forall z \in \mathbb{R}, \Pr[X\ge z] \le \Pr[Y\ge z]$
    \end{itemize}
    
    First, we note that the proof for the $\Leftarrow$ direction is trivial, as (i) is part of the condition. For the $\Rightarrow$ direction, we need to show that $\forall z\in\mathbb{R},\Pr[X>z]\le \Pr[Y>z]$ implies $\forall z\in \mathbb{R},\Pr[X\ge z]\le \Pr[Y\ge z]$. To prove this, we utilize the following property of probability measures.
    
    \begin{lemma}[Continuity of Probability, Theorem 1.1.1 in \cite{durrett2019probability}]\label{lemma:continuity_of_prob}
        Let $\{A_n\}_{n=1}^{\infty}$ be a sequence of events such that $A_1 \supseteq A_2 \supseteq \cdots$. If $A = \bigcap_{n=1}^{\infty}A_n$, then $\lim_{n\to\infty}\Pr[A_n]=\Pr[A]$.
    \end{lemma}
    
    Since the event $\{X \ge z\}$ can be expressed as the intersection of a decreasing sequence of events, i.e., $\{X \ge z\} = \bigcap_{n=1}^{\infty}\left\{X > z-\frac{1}{n}\right\}$ (this holds because $\bigcap_{n=1}^{\infty}\left(z-\frac{1}{n},+\infty\right)=[z,+\infty)$), it follows from Lemma \ref{lemma:continuity_of_prob} that
    $$
        \Pr[X\ge z]=\lim_{n\to\infty}\Pr\left[X>z-\frac{1}{n}\right].
    $$
    By assumption, for any positive integer $n$, we have
    $$
        \Pr\left[X>z-\frac{1}{n}\right]\le \Pr\left[Y>z-\frac{1}{n}\right].
    $$
    Taking the limit as $n\to\infty$ on both sides, we obtain
    $$
        \Pr[X\ge z]\le \Pr[Y\ge z].
    $$
    This completes the proof.
\end{proof}
Now we are ready to prove lemma \ref{lemma:fosd_property}. We restate lemma \ref{lemma:fosd_property} here.

\begin{lemma}[Lemma \ref{lemma:fosd_property}, restated]
\( X \preceq Y \) if and only if, for any non-decreasing function \( u: \mathbb{R} \to \mathbb{R} \) with \( \mathbb{E}[|u(X)|] < \infty \) and \( \mathbb{E}[|u(Y)|] < \infty \), it holds that $\mathbb{E}[u(X)] \le \mathbb{E}[u(Y)]$.
\end{lemma}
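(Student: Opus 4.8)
The plan is to bootstrap from Lemma~\ref{lemma:upper_set_aug_for_fosd}, which already characterizes $X \preceq Y$ via the test functions $\mathds{1}_U$ with $U$ ranging over upper sets, and then to promote these indicator test functions to an arbitrary non-decreasing $u$ using a layer-cake decomposition. The forward implication of Lemma~\ref{lemma:fosd_property} is the substantive one; the converse is essentially immediate from Lemma~\ref{lemma:upper_set_aug_for_fosd}.

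For the ``$\Leftarrow$'' direction, I would observe that for every upper set $U$ the indicator $\mathds{1}_U$ is non-decreasing and bounded, hence automatically satisfies $\mathbb{E}[|\mathds{1}_U(X)|] < \infty$ and $\mathbb{E}[|\mathds{1}_U(Y)|] < \infty$. Applying the hypothesis with $u = \mathds{1}_U$ gives $\mathbb{E}[\mathds{1}_U(X)] \le \mathbb{E}[\mathds{1}_U(Y)]$ for all upper sets $U$, and Lemma~\ref{lemma:upper_set_aug_for_fosd} then yields $X \preceq Y$.

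For the ``$\Rightarrow$'' direction, assume $X \preceq Y$ and let $u$ be non-decreasing with finite absolute expectations on both sides (note a monotone function is Borel measurable, so $u(X)$ and $u(Y)$ are genuine random variables). First I would reduce to bounded $u$ by truncation: set $u_M = (u \wedge M) \vee (-M)$, which is non-decreasing, bounded, dominated in absolute value by $|u|$, and converges to $u$ pointwise, so dominated convergence gives $\mathbb{E}[u_M(X)] \to \mathbb{E}[u(X)]$ and likewise for $Y$; hence it suffices to prove $\mathbb{E}[u_M(X)] \le \mathbb{E}[u_M(Y)]$ for each fixed $M$. For a non-decreasing $v$ with values in $[-M,M]$, I would use the pointwise layer-cake identity $v(x) = -M + \int_{-M}^{M} \mathds{1}_{\{v > t\}}(x)\, dt$, noting that each superlevel set $U_t := \{y : v(y) > t\}$ is an upper set precisely because $v$ is non-decreasing. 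Taking expectations and interchanging $\mathbb{E}$ with $\int_{-M}^{M} dt$ via Tonelli's theorem (the integrand is nonnegative and jointly measurable in $(x,t)$), one gets $\mathbb{E}[v(X)] = -M + \int_{-M}^{M} \Pr[X \in U_t]\, dt \le -M + \int_{-M}^{M} \Pr[Y \in U_t]\, dt = \mathbb{E}[v(Y)]$, where the inequality applies Lemma~\ref{lemma:upper_set_aug_for_fosd} to each $U_t$. Letting $M \to \infty$ completes the proof, and Theorem~\ref{theorem:lowhigh} then follows by extending $\mathrm{err}(\cdot)$ to a non-decreasing function on $\mathbb{R}$ (e.g.\ constant outside $[1,m]$) and taking $u = \mathrm{err}$.

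The only genuinely delicate points are bookkeeping: confirming that the superlevel sets $\{v > t\}$ are honest upper sets (of the form $(a,\infty)$, $[a,\infty)$, $\emptyset$, or $\mathbb{R}$, all covered by Lemma~\ref{lemma:upper_set_aug_for_fosd}) and justifying the interchange of expectation with the Lebesgue integral in $t$; neither is hard. I would also remark that, since in the paper's setting $X$ and $Y$ are supported on the finite set $[m]$, the ``$\Rightarrow$'' direction has a one-line alternative via summation by parts: with $S_X(i) := \Pr[X \ge i]$ one has $\mathbb{E}[u(X)] = u(1) + \sum_{i=2}^{m} \bigl(u(i) - u(i-1)\bigr) S_X(i)$, and $X \preceq Y$ gives $S_X(i) \le S_Y(i)$ while monotonicity of $u$ gives $u(i) - u(i-1) \ge 0$, so the inequality is immediate; the layer-cake route is preferred only because it establishes the general statement as asserted.
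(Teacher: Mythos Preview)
Your proposal is correct and follows essentially the same route as the paper: both directions hinge on Lemma~\ref{lemma:upper_set_aug_for_fosd}, and the forward implication is obtained by writing a non-decreasing $u$ as a superposition of upper-set indicators via a layer-cake formula, then applying Tonelli. The only real difference is in how the unbounded case is handled: the paper truncates from below only ($u_n=\max\{u,-n\}$) and appeals to monotone convergence, whereas you truncate on both sides ($u_M=(u\wedge M)\vee(-M)$) and use dominated convergence with the integrable majorant $|u|$; your version is arguably tidier since the dominating function is already supplied by the hypothesis $\mathbb{E}[|u(X)|]<\infty$. Your closing Abel-summation remark for the finite-support case also matches the paper's separate discrete-case proof.
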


\begin{proof}
    The direction $\Leftarrow$ is trivial, since for any upper set $U$, the function $\mathds{1}_U(x)$ is non-decreasing. One only needs to take $u(x) = \mathds{1}_U(x)$, and the desired conclusion follows immediately from the lemma \ref{lemma:upper_set_aug_for_fosd}.

    We now prove the direction $\Rightarrow$. We will need the layer cake representation.
    \begin{lemma}[Layer Cake Representation, \cite{lieb2001analysis}]\label{lemma:layer_cake}
        Let $f$ be a non-negative measurable real-valued function defined on a measure space $(\Omega, \mathcal{A}, \mu)$. Then
        \[
        f(x) = \int_0^{+\infty} \mathds{1}_{\{y \in \Omega : f(y) \ge t\}}(x)  \mathrm{d}t.
        \]
    \end{lemma}
    We also require the following fact.
        \begin{fact}\label{fact:monotone_implies_m}
        Every monotone function is measurable.
        \end{fact}
        Fact \ref{fact:monotone_implies_m} holds because any monotone function has at most countably many points of discontinuity. In what follows, we will assume all functions under discussion are measurable and omit this qualifier.

    We first consider the case where $u(x)$ is bounded below. Without loss of generality, we may assume $u(x) \ge 0$, since if $u(x)$ is bounded below, adding a sufficiently large constant (which does not affect the expectation) makes it non-negative. When the measure space is a probability space $(\mathbb{R}, \mathcal{A}, \Pr)$, for a random varible $X$ with $\mathbb{E}[|X|]<\infty$, Lemma \ref{lemma:layer_cake} and the Fubini--Tonelli theorem imply:
    \[
    \begin{aligned}
        \mathbb{E}[u(X)]
        &= \int_{-\infty}^{+\infty} u  \mathrm{d}\Pr \\
        &= \int_{0}^{+\infty} \Pr[\{y \in \mathbb{R} : u(y) \ge t\}]  \mathrm{d}t \\
        &= \int_{0}^{+\infty} \mathbb{E}[\mathds{1}_{U_t}(X)]  \mathrm{d}t,
    \end{aligned}
    \]
    where it is easy to verify that $U_t = \{y \in \mathbb{R} : u(y) \ge t\}$ is an upper set. By Lemma \ref{lemma:upper_set_aug_for_fosd}, we have:
    \[
    \begin{aligned}
        X \preceq Y 
        &\Rightarrow \forall U,\ \mathbb{E}[\mathds{1}_U(X)] \le \mathbb{E}[\mathds{1}_U(Y)] \\
        &\Rightarrow \mathbb{E}[u(X)] = \int_{0}^{+\infty} \mathbb{E}[\mathds{1}_{U_t}(X)]  \mathrm{d}t \le \int_{0}^{+\infty} \mathbb{E}[\mathds{1}_{U_t}(Y)]  \mathrm{d}t = \mathbb{E}[u(Y)].
    \end{aligned}
    \]
    This completes the proof for the case where $u(x)$ is non-negative.

    Next, we consider the case where $u(x)$ is not bounded below. Define $u_n(x) = \max\{u(x), -n\}$. Then for any fixed positive integer $n$, $u_n(x) \ge -n$, which is lower-bounded. By the previous result, we have:
    \[
    X \preceq Y \Rightarrow \mathbb{E}[u_n(X) + n] \le \mathbb{E}[u_n(Y) + n] \Rightarrow \mathbb{E}[u_n(X)] \le \mathbb{E}[u_n(Y)].
    \]
    We now apply the monotone convergence theorem.
    \begin{lemma}[Monotone Convergence Theorem, Theorem 1.6.6 in \cite{durrett2019probability}]\label{lemma:mono_converge}
        Let $\{X_n\}_{n=1}^{\infty}$ be a sequence of non-negative random variables converging almost surely to $X$, i.e., $X_n\xrightarrow{\mathrm{a.s.}}X$. Then $\lim_{n \to \infty} \mathbb{E}[X_n] = \mathbb{E}[X]$.
    \end{lemma}
    Since $u(x)$ is non-decreasing, the sequence $\{u_n(x)\}$ has a uniform lower bound (independent of $n$), and there exists $x'$ such that for all $x > x'$, $u_n(x) = u(x)$. Thus, for the random variable $X$, $u_n(X)\xrightarrow{\mathrm{a.s.}} u(X)$. By Lemma \ref{lemma:mono_converge}, for sufficiently large $M$, taking limits on both sides of the inequality $\mathbb{E}[u_n(X) + M] \le \mathbb{E}[u_n(Y) + M]$ and then subtracting $M$, we obtain:
    \[
    \mathbb{E}[u(X)] \le \mathbb{E}[u(Y)],
    \]
    which completes the proof.
\end{proof}

Now we are ready to prove theorem \ref{theorem:lowhigh}.
\begin{theorem}[Theorem \ref{theorem:lowhigh}, restated]
If \( X \preceq Y \), then $\mathbb{E}[\mathrm{err}(X)] \le \mathbb{E}[\mathrm{err}(Y)]$.
\end{theorem}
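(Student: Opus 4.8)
The plan is to obtain Theorem~\ref{theorem:lowhigh} as an immediate corollary of Lemma~\ref{lemma:fosd_property}. The only gap is that Lemma~\ref{lemma:fosd_property} speaks of non-decreasing functions $u:\mathbb{R}\to\mathbb{R}$, whereas $\mathrm{err}$ is a priori defined only on $[m]$. So the first step is to exhibit an extension $\tilde u:\mathbb{R}\to\mathbb{R}$ that (a) agrees with $\mathrm{err}$ on $[m]$, (b) is non-decreasing on all of $\mathbb{R}$, and (c) has finite absolute expectation under both $p_X$ and $p_Y$. Recall that, following Assumption~\ref{ass:complexity}, the strategies were relabeled so that $\epsilon_1\le\epsilon_2\le\cdots\le\epsilon_m$; hence $\mathrm{err}$ is already non-decreasing on the integer grid $\{1,\dots,m\}$. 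I would set $\tilde u(x)=\epsilon_1$ for $x<1$, $\tilde u(x)=\epsilon_{\lfloor x\rfloor}$ for $1\le x\le m$, and $\tilde u(x)=\epsilon_m$ for $x>m$ (any order-preserving interpolation works equally well). Monotonicity of $\tilde u$ is then immediate from $\epsilon_1\le\cdots\le\epsilon_m$, and since every $\epsilon_i\in[0,1]$ the extension is bounded, so $\mathbb{E}[|\tilde u(X)|]\le 1<\infty$ and $\mathbb{E}[|\tilde u(Y)|]\le 1<\infty$.

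Next I would observe that extending the domain does not change the expected error: because $X$ and $Y$ take values in $[m]$ and $\tilde u$ restricted to $[m]$ equals $\mathrm{err}$, we have $\mathbb{E}_{X\sim p_X}[\tilde u(X)]=\mathbb{E}_{X\sim p_X}[\mathrm{err}(X)]$ and likewise $\mathbb{E}_{Y\sim p_Y}[\tilde u(Y)]=\mathbb{E}_{Y\sim p_Y}[\mathrm{err}(Y)]$. Then, applying Lemma~\ref{lemma:fosd_property} to the non-decreasing function $\tilde u$ together with the hypothesis $X\preceq Y$ yields $\mathbb{E}[\tilde u(X)]\le\mathbb{E}[\tilde u(Y)]$, which by the identity just noted is exactly $\mathbb{E}_{X\sim p_X}[\mathrm{err}(X)]\le\mathbb{E}_{Y\sim p_Y}[\mathrm{err}(Y)]$, as claimed.

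There is no genuinely difficult step here: all the substance lives in Lemma~\ref{lemma:fosd_property}, which has already been established via the upper-set characterization (Lemma~\ref{lemma:upper_set_aug_for_fosd}) and the layer-cake representation. The single point deserving care — and the one I would spell out explicitly — is that the non-decreasing property of $\tilde u$ rests entirely on the strategies having been relabeled in order of increasing error rate (equivalently, increasing complexity under Assumption~\ref{ass:complexity}); without that ordering the monotonicity, and hence the conclusion, would fail.
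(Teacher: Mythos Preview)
Your proposal is correct and follows essentially the same route as the paper: extend $\mathrm{err}$ to a non-decreasing step function $\tilde u$ on all of $\mathbb{R}$ that agrees with $\mathrm{err}$ on $[m]$, and then invoke Lemma~\ref{lemma:fosd_property}. Your version is in fact slightly more careful than the paper's, since you explicitly verify the integrability hypotheses $\mathbb{E}[|\tilde u(X)|],\mathbb{E}[|\tilde u(Y)|]<\infty$ via boundedness and flag that monotonicity of $\tilde u$ depends on the prior relabeling $\epsilon_1\le\cdots\le\epsilon_m$.
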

\begin{proof}
To apply lemma \ref{lemma:fosd_property}, we need to modify the $\mathrm{err}$ function slightly. Define $\widetilde{\mathrm{err}}: \mathbb{R} \to [0,1]$ such that
\[
\widetilde{\mathrm{err}}(x) = \begin{cases} 
e_1, & x \le 1 \\
e_i, & x \in (i, i+1], \, i=1,2,\cdots,m-2 \\
e_m, & x > m-1 
\end{cases}
\]
It is easy to verify that $\widetilde{\mathrm{err}}(i) = \mathrm{err}(i)$, so we can directly compare $\mathbb{E}[\widetilde{\mathrm{err}}(X)]$ and $\mathbb{E}[\widetilde{\mathrm{err}}(Y)]$. $\widetilde{\mathrm{err}}(x)$ is a step function on $\mathbb{R}$, which is clearly non-decreasing. Thus, applying lemma \ref{lemma:fosd_property} under the condition $X \preceq Y$, we have $\mathbb{E}[\widetilde{\mathrm{err}}(X)] \le \mathbb{E}[\widetilde{\mathrm{err}}(Y)]$, and therefore $\mathbb{E}[\mathrm{err}(X)] \le \mathbb{E}[\mathrm{err}(Y)]$.
\end{proof}

\subsubsection{First-Order Stochastic Dominance}
The order relation ($X \preceq Y$) used in the main text to measure distributional preference is known in economics and probability theory as \textit{First-Order Stochastic Dominance (FOSD)}. This criterion provides a precise mathematical definition for the notion that “one random variable yields better outcomes than another in a probabilistic sense.” According to the standard definition, $Y$ is said to first-order stochastically dominate $X$ if the cumulative distribution function $F_Y$ is everywhere no greater than $F_X$, i.e., $F_Y(z) \le F_X(z)$ for all $z \in \mathbb{R}$. Thus, the notation $X \preceq Y$ in this paper is equivalent to $Y$ exhibiting first-order stochastic dominance over $X$.

\begin{definition}[First-Order Stochastic Dominance]\label{def:fosd}
For random variables $X, Y$ on $\mathbb{R}$, we say $Y$ has FOSD over $X$ if $\forall z \in \mathbb{R}$, $\Pr[X > z] \le \Pr[Y > z]$, denoted $X \le_{\mathrm{FOSD}} Y$.
\end{definition}

\subsubsection{Proof of Theorem \ref{theorem:lowhigh} in the Discrete Case}
We may note that Lemma \ref{lemma:fosd_property} is both elegant and concise, applicable to both continuous and discrete random variables. However, its proof can appear somewhat intricate. In fact, the central result of this paper, Theorem \ref{theorem:lowhigh}, is established under the assumption of discrete random variables, namely the condition \( \forall i \in [m-1], \sum_{j > i} p_X(j) \le \sum_{j > i} p_Y(j) \). Under this assumption, the proof becomes considerably simpler. We provide the argument here. To streamline notation, let $X,Y$ be random variables representing inference strategies, taking values only in $[m]$, with probabilities denoted $x_i=p_X(i)$ and $y_i=p_Y(i)$ for $i\in [m]$. The corresponding expected error rates of strategies $X$ and $Y$ are given by $\mathbb{E}[\mathrm{err}(X)]=\sum_{i\in [m]}\epsilon_i x_i$ and $\mathbb{E}[\mathrm{err}(Y)]=\sum_{i\in [m]}\epsilon_i y_i$.

\begin{theorem}
Suppose $0\le \epsilon_1\le \cdots\le \epsilon_m\le 1$, if for every $1\le i\le m-1$, it holds that $\sum_{i<j\le m}x_j\le \sum_{i<j\le m}y_j$, then $\sum_{i\in [m]}\epsilon_i x_i\le \sum_{i\in [m]}\epsilon_i y_i$.
\end{theorem}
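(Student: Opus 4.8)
The plan is to prove the inequality by Abel summation (summation by parts), exploiting the fact that the hypothesis is exactly a statement about the tail partial sums of the difference $y_i-x_i$. First I would set $d_i:=y_i-x_i$ for $i\in[m]$ and define the tail sums $T_i:=\sum_{j=i+1}^{m} d_j$ for $i=0,1,\dots,m$. Since $x$ and $y$ are probability distributions, $T_0=\sum_{j\in[m]}d_j=0$, and trivially $T_m=0$; the hypothesis $\sum_{i<j\le m}x_j\le\sum_{i<j\le m}y_j$ for $1\le i\le m-1$ says precisely that $T_i\ge 0$ for every such $i$. Hence $T_i\ge 0$ for all $i=0,\dots,m$.

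Next I would rewrite the target quantity $\sum_{i\in[m]}\epsilon_i y_i-\sum_{i\in[m]}\epsilon_i x_i=\sum_{i=1}^{m}\epsilon_i d_i$ using the identity $d_i=T_{i-1}-T_i$, which is immediate from the definition of the tail sums. Substituting and reindexing one of the two resulting sums gives
\[
\sum_{i=1}^{m}\epsilon_i d_i=\sum_{i=1}^{m}\epsilon_i\,(T_{i-1}-T_i)=\epsilon_1 T_0-\epsilon_m T_m+\sum_{i=1}^{m-1}(\epsilon_{i+1}-\epsilon_i)\,T_i.
\]
Because $T_0=T_m=0$, the two boundary terms vanish, leaving $\sum_{i=1}^{m-1}(\epsilon_{i+1}-\epsilon_i)\,T_i$. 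Each factor $\epsilon_{i+1}-\epsilon_i$ is nonnegative by the monotonicity assumption $\epsilon_1\le\cdots\le\epsilon_m$, and each $T_i$ is nonnegative by the previous step, so every summand is nonnegative and the whole sum is $\ge 0$. This gives $\sum_{i\in[m]}\epsilon_i x_i\le\sum_{i\in[m]}\epsilon_i y_i$, as claimed.

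The argument is essentially routine; the only place that needs care is the index bookkeeping in the summation-by-parts step — in particular, checking that reindexing $\sum_i\epsilon_i T_{i-1}$ correctly produces the boundary term $\epsilon_1 T_0$ together with the bulk term carrying coefficient $\epsilon_{i+1}-\epsilon_i$. I would also remark that this discrete proof is a special case of Lemma~\ref{lemma:fosd_property}: the step-function extension $\widetilde{\mathrm{err}}$ used in that setting is what makes the general FOSD machinery applicable, but here the finiteness of the support lets us bypass the layer-cake and monotone-convergence arguments entirely and argue by a one-line telescoping identity.
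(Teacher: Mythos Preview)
Your proof is correct and follows essentially the same approach as the paper's: both arguments rewrite the target difference via Abel summation (summation by parts) and then observe that each summand is a product of a nonnegative increment $\epsilon_{i+1}-\epsilon_i$ with a nonnegative partial sum. The only cosmetic difference is that you work with tail sums $T_i=\sum_{j>i}(y_j-x_j)$ while the paper uses prefix sums $\sum_{j\le i}(x_j-y_j)$, but under the probability-distribution constraint $\sum_j x_j=\sum_j y_j$ these are literally the same quantity, so the two proofs coincide.
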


\begin{proof}
Since $\sum_{i\in[m]}x_i=\sum_{i\in [m]}y_i=1$, the condition $\sum_{i<j\le m}x_j\le \sum_{i<j\le m}y_j$ for all $1\le i\le m-1$ is equivalent to $\sum_{j\le i}(x_j-y_j)\ge 0$ for all $1\le i\le m$. To prove $\sum_{i\in [m]}\epsilon_i x_i\le \sum_{i\in [m]}\epsilon_i y_i$, it suffices to show that $\sum_{i\in [m]}\epsilon_i(x_i-y_i)\le 0$. Define $w_i=x_i-y_i$ and set $\sum_{j<1}w_j=0$, then
$$
\begin{aligned}
    \sum_{i=1}^m\epsilon_i(x_i-y_i)
    &= \sum_{i=1}^m\left(\sum_{j\le i}w_j-\sum_{j<i}w_j\right)\\
    &= \sum_{i=1}^m\left(\sum_{j\le i}w_j\right)\epsilon_i-\sum_{i=1}^{m-1}\left(\sum_{j\le i}w_j\right)\epsilon_{i+1}\\
    &= \epsilon_m \underbrace{\sum_{i=1}^m w_i}_{=0}-\sum_{i=1}^{m-1}\left(\sum_{j\le i}w_j\right)(\epsilon_{i+1}-\epsilon_i)\quad \text{(Abelian transformation)}\\
    &= \sum_{i=1}^{m-1}\underbrace{\sum_{j\le i}w_j}_{\ge 0, \, \forall i\in [m]}(\underbrace{\epsilon_{i}-\epsilon_{i+1}}_{\le 0})\\
    &\le 0.
\end{aligned}
$$

This completes the proof.
\end{proof}

\subsection{Related Work}

\subsubsection{Chain-of-Thought Reasoning}
\label{cot_related}

Chain-of-thought (CoT) reasoning \citep{wei2022chain} guides large language models (LLMs) to explicitly generate intermediate steps, thereby improving interpretability and reasoning quality. CoT has achieved notable success in diverse domains, including mathematical problem solving \citep{sprague2024cot} and multi-modal inference \citep{chen2024m, mitra2024compositional,wang2025multimodal}, where its transparent reasoning paths also enable collaborative and verifiable inference. 

Research on CoT reasoning has primarily centered on prompt construction \citep{shao2023synthetic}. Manual prompting methods such as CCoT \citep{chia2023contrastive} achieve strong performance but require substantial human effort and suffer from poor generalizability. Automated methods like Auto-CoT \citep{zhang2022automatic} reduce manual costs yet often introduce instability. To balance performance and usability, semi-automatic prompting approaches such as AutoMate CoT \citep{shum2023automatic} have been proposed. Despite these advances, existing CoT methods typically encourage models to favor certain reasoning trajectories, which can inadvertently restrict the diversity of solution strategies. This limitation becomes particularly critical in the context of scaling.

\subsubsection{Test-Time Scaling}

Test-time scaling (TTS) extends CoT reasoning by allocating additional inference-time computation to sample and aggregate diverse reasoning paths. Recent surveys provide systematic taxonomies of TTS methods, covering what, how, where, and how well to scale \citep{zhang2025survey}, and interpret its role as a transition from fast System-1 style intuition to deliberate System-2 style reasoning \citep{ji2025test}. Others highlight its significance for cognition engineering, positioning TTS as a key driver of deeper reasoning in LLMs \citep{xia2025generative,chen2025towards}.

Methodological innovations focus on enhancing efficiency and robustness. For instance, MetaScale adaptively selects cognitive strategies through evolving meta-thoughts \citep{liu2025metascale}, while s1-32B demonstrates that scaling effects can be achieved with minimal training data and a simple budget forcing mechanism \citep{muennighoff2025s1}. At the same time, limitations of naive scaling have been exposed. \citep{zeng2025revisiting} observe that longer chains of thought may reduce accuracy because of excessive self-revisions, and \citep{wang2025scaling} establish theoretical upper bounds showing diminishing returns once the compute budget exceeds a certain threshold.

Although these works improve the efficiency and stability of TTS, they do not explicitly consider strategy-selection bias, where LLMs concentrate on preferred solution patterns while ignoring alternatives. 
As our work shows, such bias plays a decisive role in shaping scaling effectiveness, highlighting the need for frameworks that explicitly model and balance strategy distributions.


\begin{figure}
    \begin{center}
        \centering
        \captionsetup{type=figure}
        \includegraphics[width=0.33\linewidth]{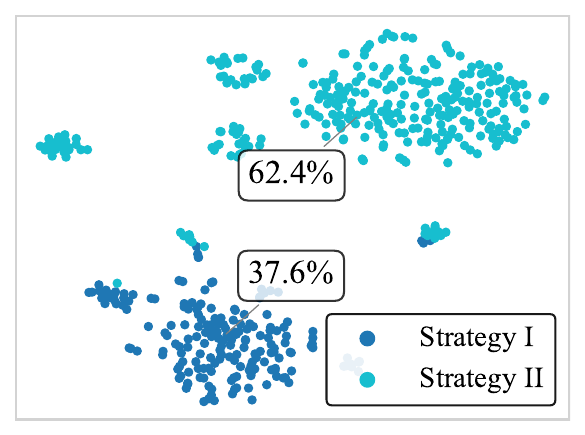}\hfill
        \includegraphics[width=0.33\linewidth]{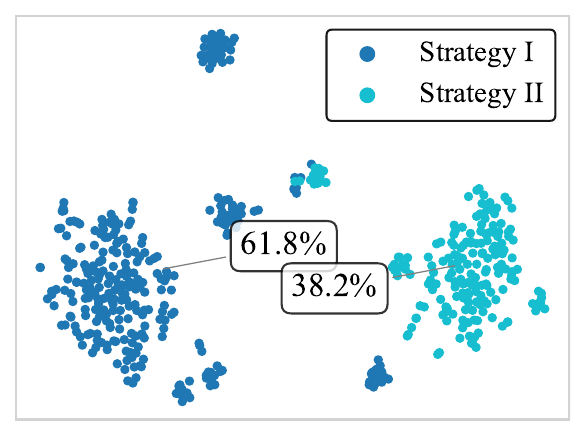}\hfill
        \includegraphics[width=0.33\linewidth]{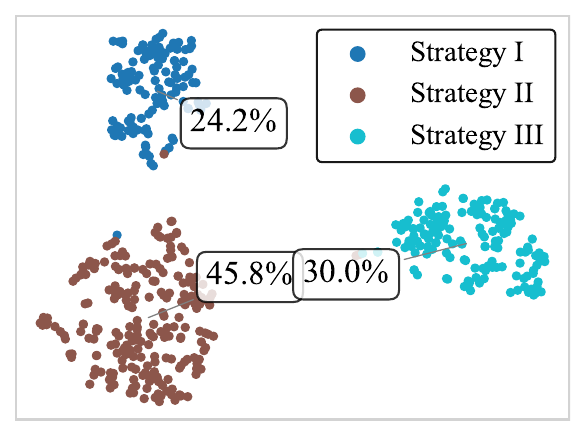}
        \captionof{figure}{t-SNE visualization of embeddings (MiniLM encoder) for the 500 independent reasoning paths generated by GPT-4o-mini on three randomly selected AIME samples.}
    \end{center}%
\end{figure}

\begin{figure}
    \begin{center}
        \centering
        \captionsetup{type=figure}
        
        \includegraphics[width=0.33\linewidth]{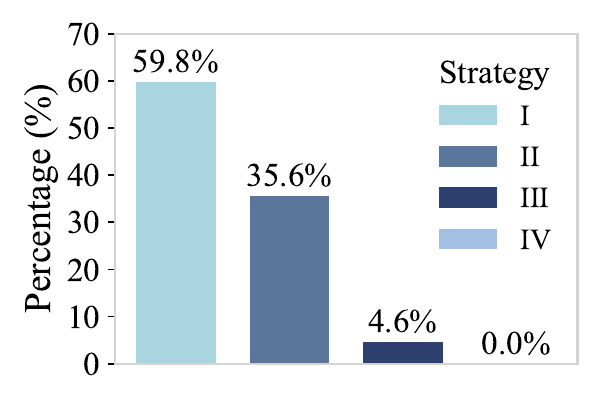}\hfill
        \includegraphics[width=0.33\linewidth]{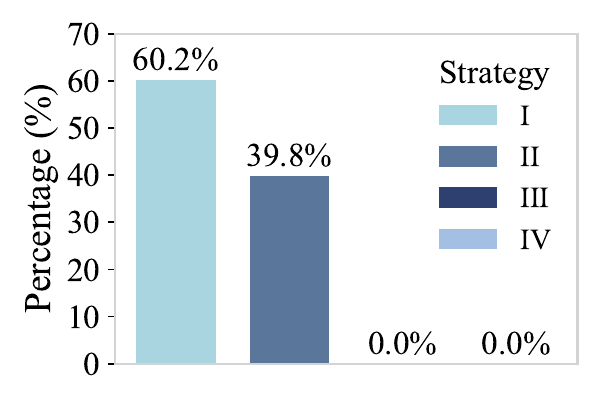}\hfill
        \includegraphics[width=0.33\linewidth]{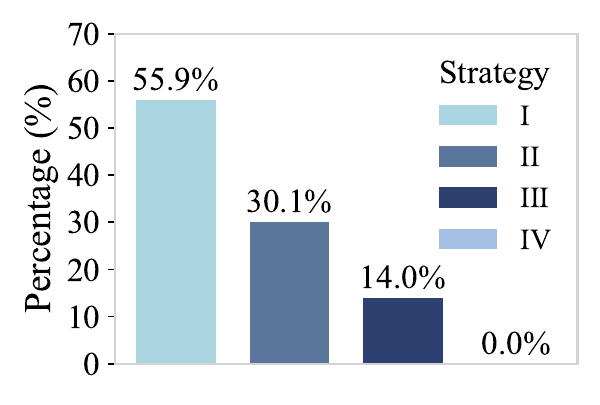}
        \captionof{figure}{Percentage distribution of reasoning strategy clusters identified by OpenAI-o3 for the 500 independent reasoning paths generated by GPT-4o-mini on three randomly selected AIME samples.}
    \end{center}%
\end{figure}

\subsection{Experimental Validation of Theorem \ref{theorem:lowhigh}}
\label{sec:exper_theo}

To validate Theorem \ref{theorem:lowhigh}, we conduct experiments on GPT-4o-mini using a set of samples selected from the AQuA dataset. For each sample, three scenarios are considered: low-complexity dominance, uniform sampling, and high-complexity dominance. In each scenario, 400 independent reasoning paths are generated: under low-complexity dominance, the low-complexity strategy receives 250 samples while the remaining strategies equally share the rest of the budget; under uniform sampling, all strategies evenly share the 400 samples; under high-complexity dominance, the high-complexity strategy receives 250 samples while the remaining strategies equally share the rest.

Performance is evaluated by the proportion of correct paths within the total budget. As shown in Figure \ref{fig:dominance}, performance under low-complexity dominance exceeds that of uniform sampling, whereas high-complexity dominance leads to a substantial decline, thereby corroborating Theorem \ref{theorem:lowhigh}.

\begin{figure}
    \begin{center}
        \centering
        \captionsetup{type=figure}
        
        \includegraphics[width=0.33\linewidth]{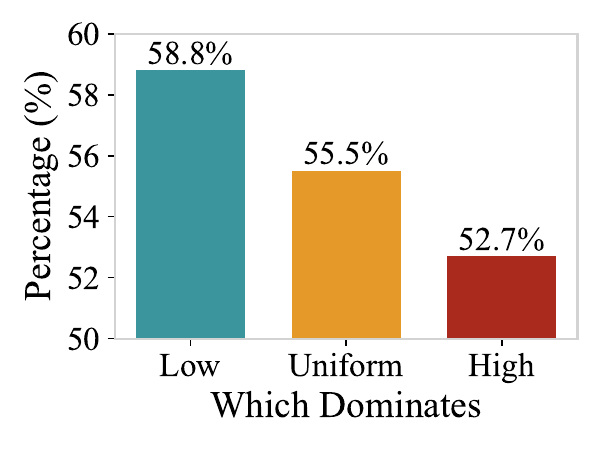}\hfill
        \includegraphics[width=0.33\linewidth]{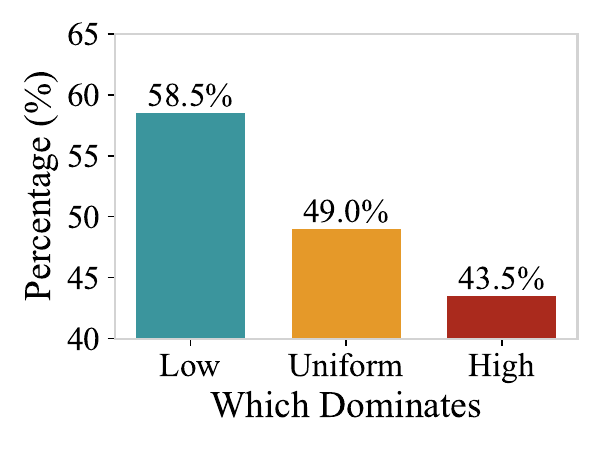}\hfill
        \includegraphics[width=0.33\linewidth]{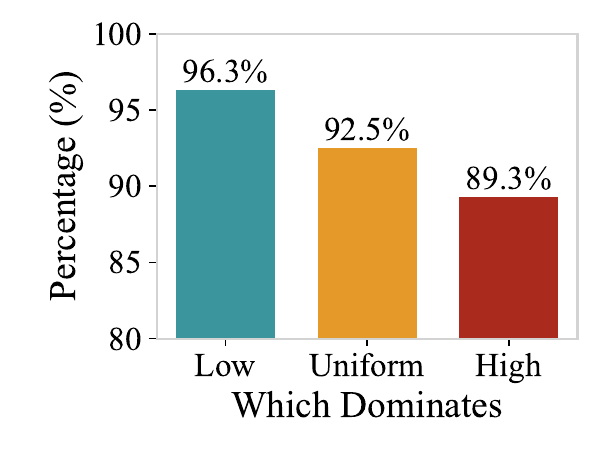}
        \includegraphics[width=0.33\linewidth]{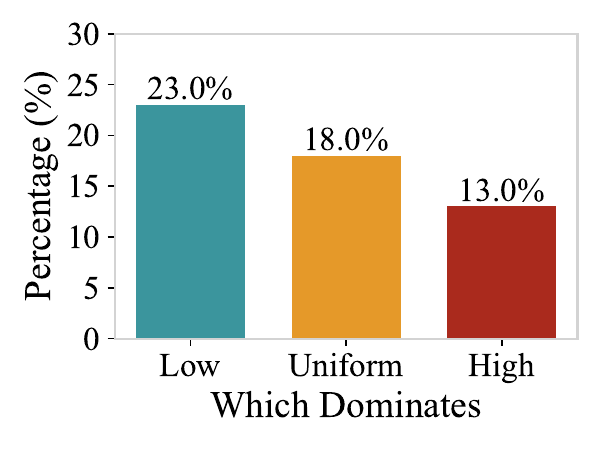}\hfill
        \includegraphics[width=0.33\linewidth]{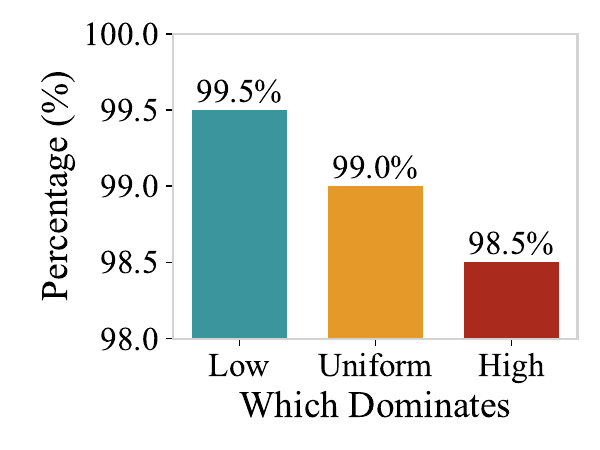}\hfill
        \includegraphics[width=0.33\linewidth]{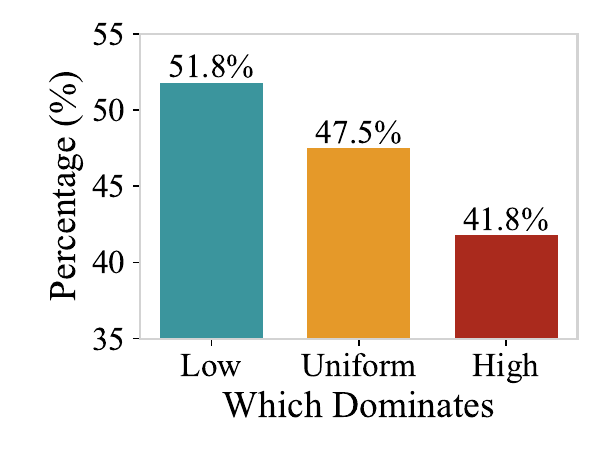}
                            \vspace{-4pt}
        \captionof{figure}{Experimental results on GPT-4o-mini with six AQuA samples under three scenarios (\ie low-complexity dominance, uniform sampling, and high-complexity dominance).}
        \label{fig:dominance}
    \end{center}%
    \vspace{-16pt}
\end{figure}

\subsection{The Relationship among Complexity, Entropy, and Accuracy}
\label{sec:ralationship}

To further understand the internal connections among strategy complexity, entropy, and accuracy, we conduct experiments on six representative samples from the AQuA dataset using GPT-4o-mini. For each problem, we coarsely identify (\ie Uniform-C (Z) method) four distinct solution strategies and generate 50 reasoning paths per strategy. We then employ a stronger LLM (OpenAI-o3) to estimate the minimum token requirement of each strategy, compute the entropy of the resulting answer distribution, and measure the proportion of correct answers. 

As shown in Figure~\ref{fig:relationship}, the results consistently demonstrate a clear correlation among the three factors. Strategies with lower minimum token requirements generally correspond to lower entropy values, indicating more stable and confident predictions. In turn, these strategies also achieve higher accuracy, as reflected by the larger proportion of correct reasoning paths. Conversely, strategies with higher token requirements tend to exhibit higher entropy and reduced accuracy, suggesting that more complex reasoning processes are prone to unstable solution patterns and less reliable outcomes. 


\begin{figure}
    \begin{center}
        \centering
        \captionsetup{type=figure}
        
        \includegraphics[width=0.33\linewidth]{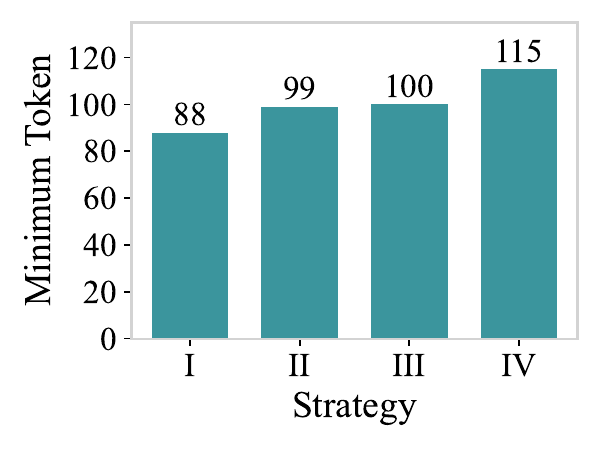}\hfill
        \includegraphics[width=0.33\linewidth]{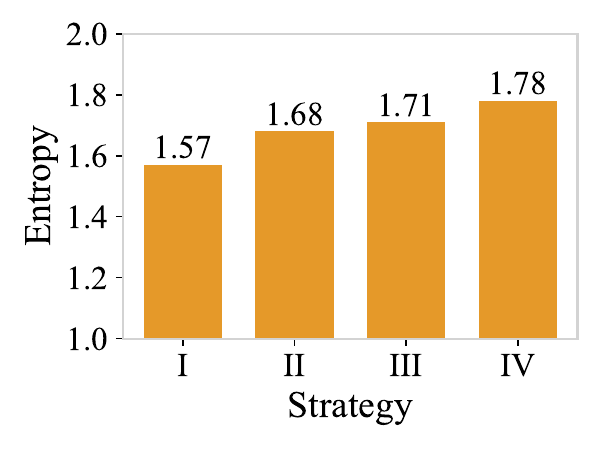}\hfill
        \includegraphics[width=0.33\linewidth]{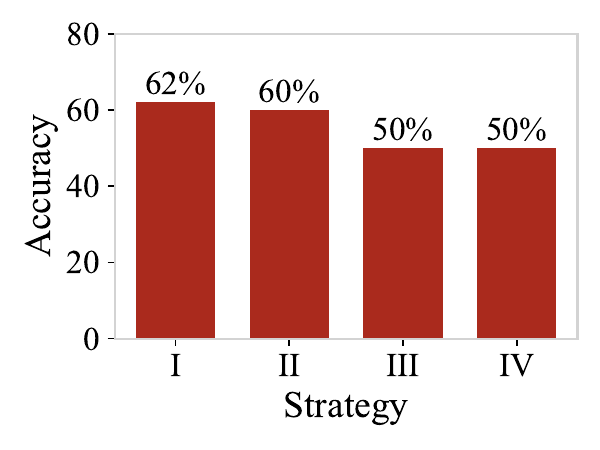}

        \includegraphics[width=0.33\linewidth]{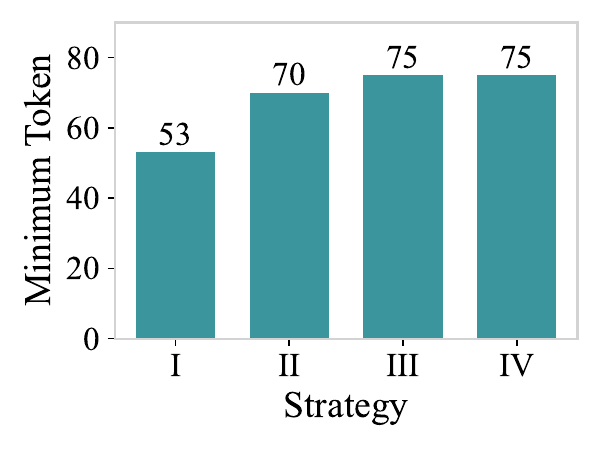}\hfill
        \includegraphics[width=0.33\linewidth]{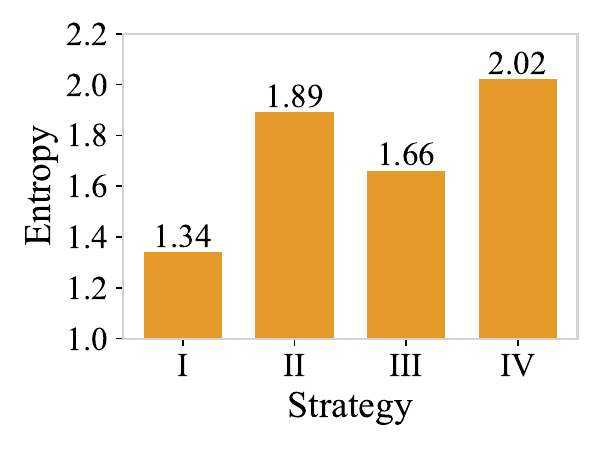}\hfill
        \includegraphics[width=0.33\linewidth]{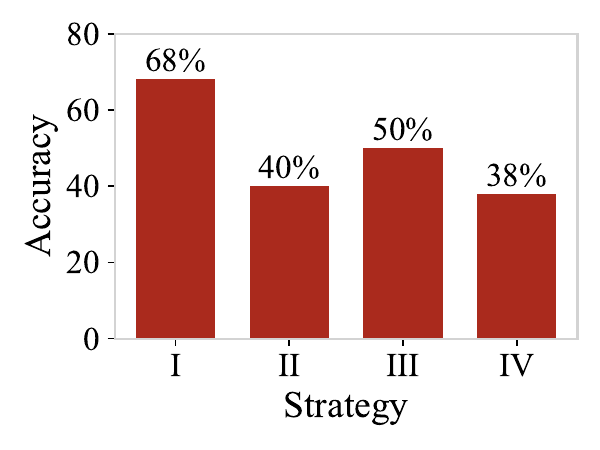}

        \includegraphics[width=0.33\linewidth]{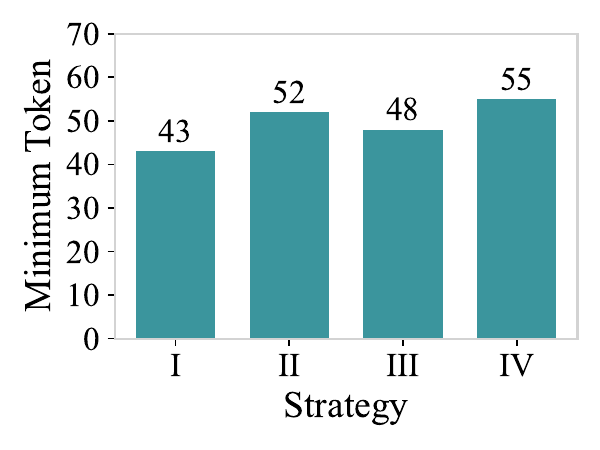}\hfill
        \includegraphics[width=0.33\linewidth]{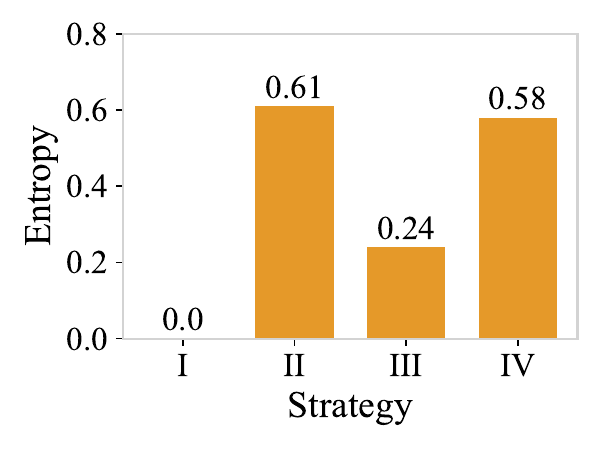}\hfill
        \includegraphics[width=0.33\linewidth]{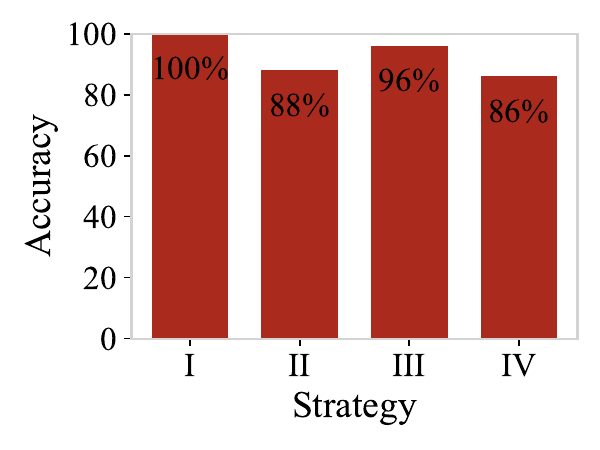}

        \includegraphics[width=0.33\linewidth]{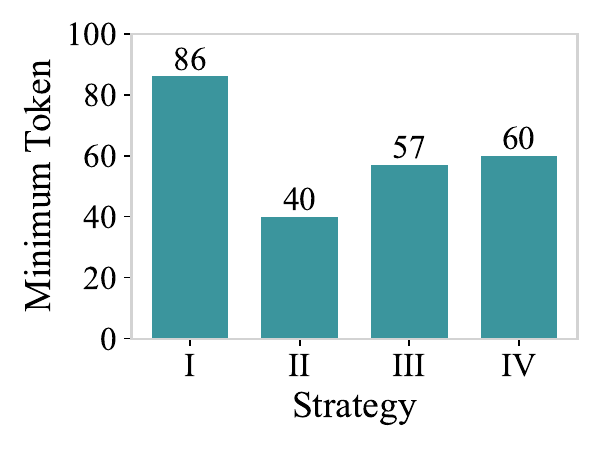}\hfill
        \includegraphics[width=0.33\linewidth]{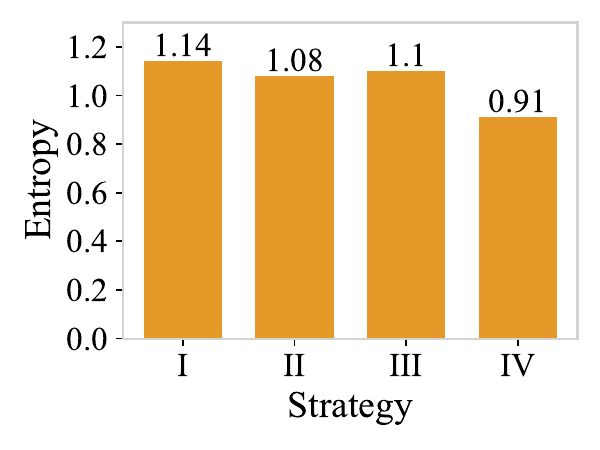}\hfill
        \includegraphics[width=0.33\linewidth]{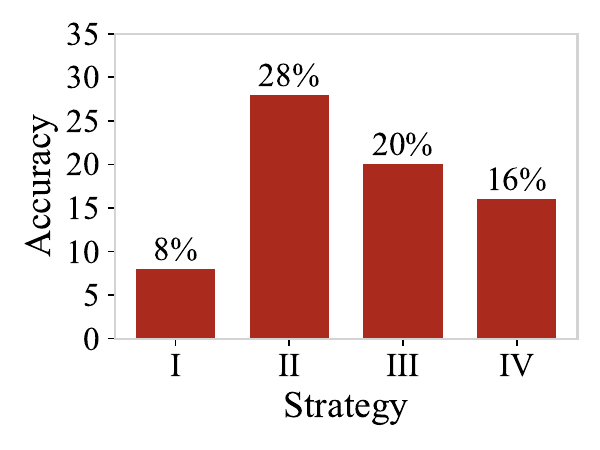}

        \includegraphics[width=0.33\linewidth]{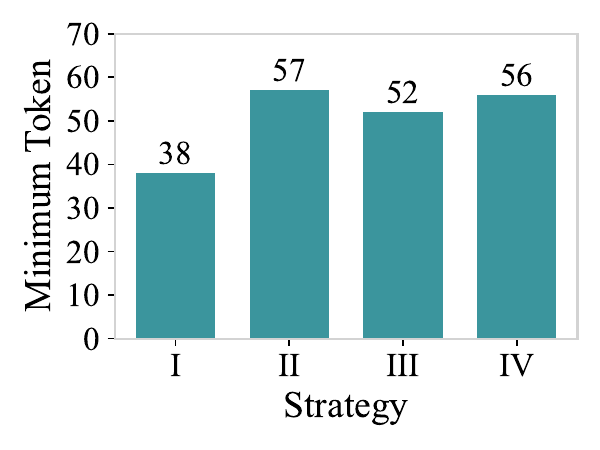}\hfill
        \includegraphics[width=0.33\linewidth]{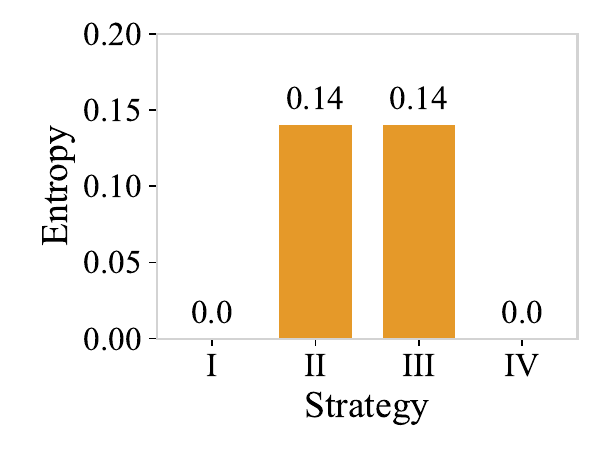}\hfill
        \includegraphics[width=0.33\linewidth]{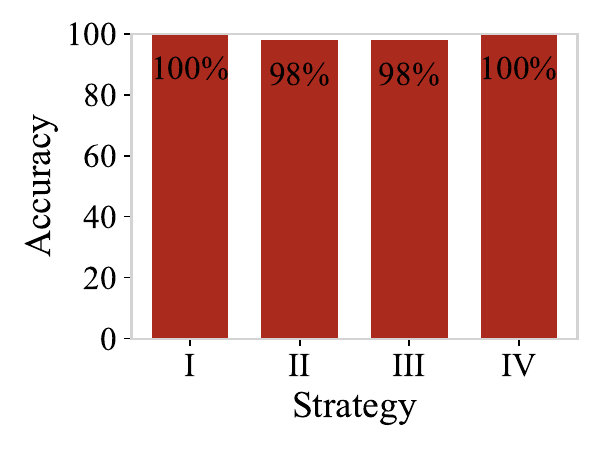}

        \includegraphics[width=0.33\linewidth]{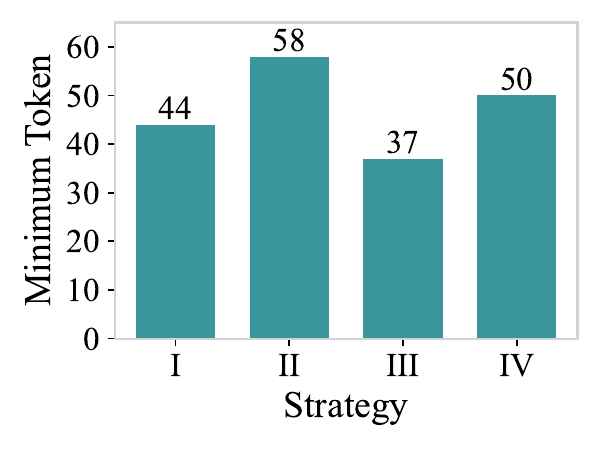}\hfill
        \includegraphics[width=0.33\linewidth]{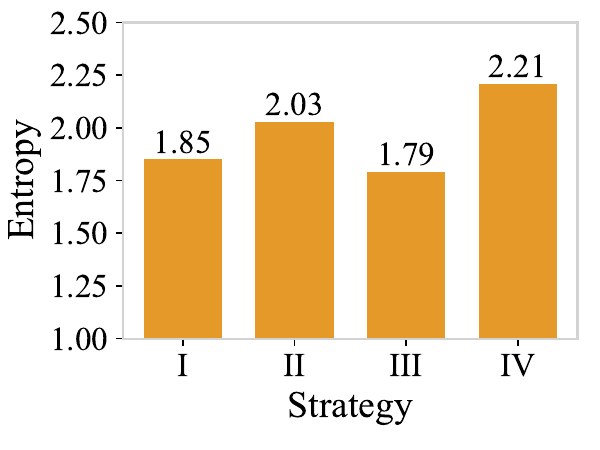}\hfill
        \includegraphics[width=0.33\linewidth]{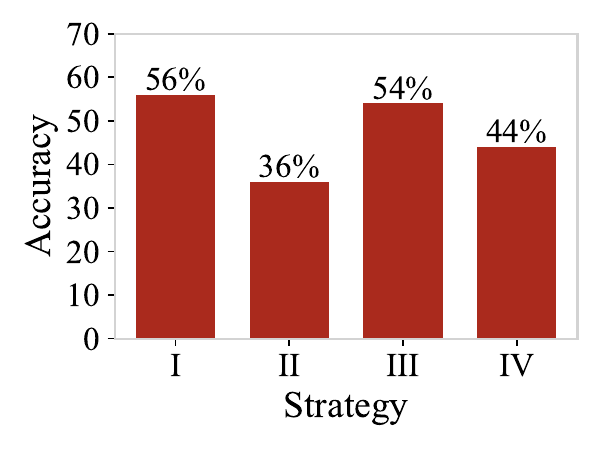}
    
                            \vspace{-4pt}
        \captionof{figure}{Strategy complexity, entropy, and accuracy across six AQuA samples.}
        \label{fig:relationship}
    \end{center}%
    \vspace{-8pt}
\end{figure}

\begin{figure}
    \begin{center}
        \centering
        \captionsetup{type=figure}
        
        \includegraphics[width=0.33\linewidth]{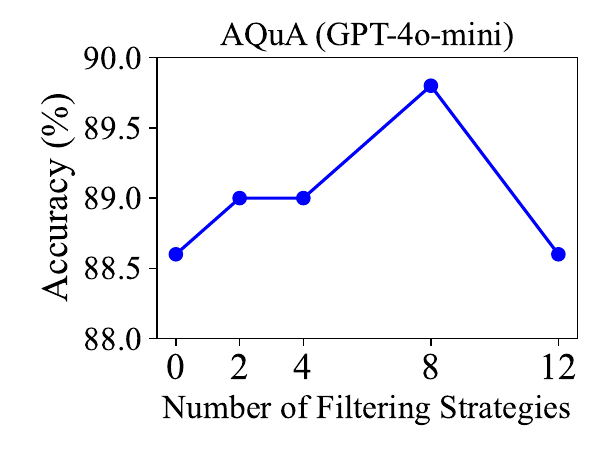}\hfill
        \includegraphics[width=0.33\linewidth]{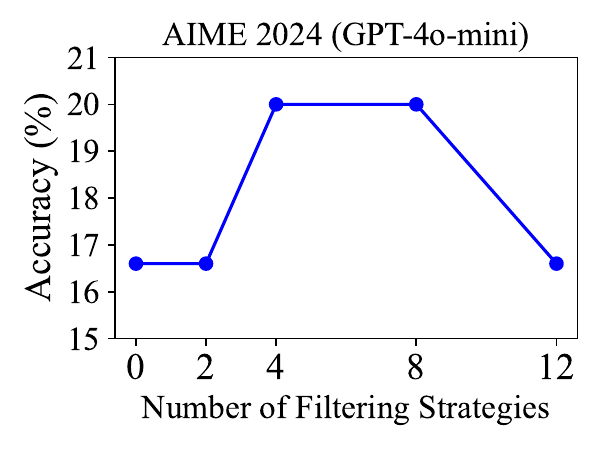}\hfill
        \includegraphics[width=0.33\linewidth]{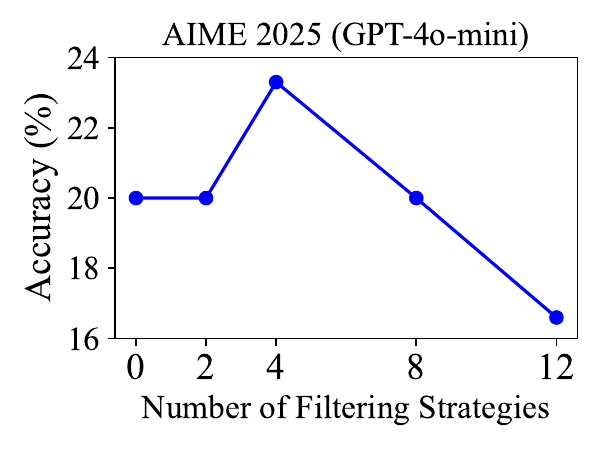}
        \includegraphics[width=0.33\linewidth]{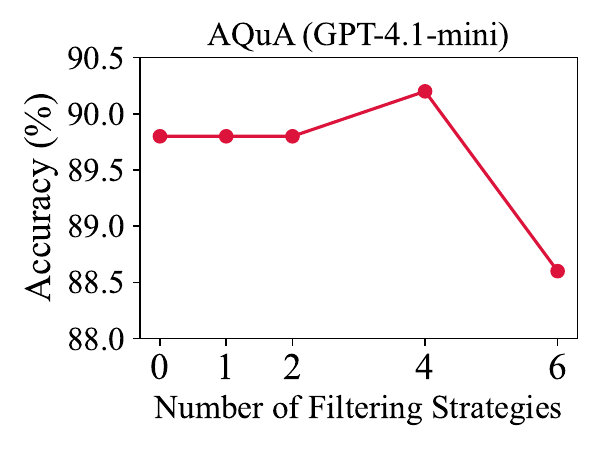}\hfill
        \includegraphics[width=0.33\linewidth]{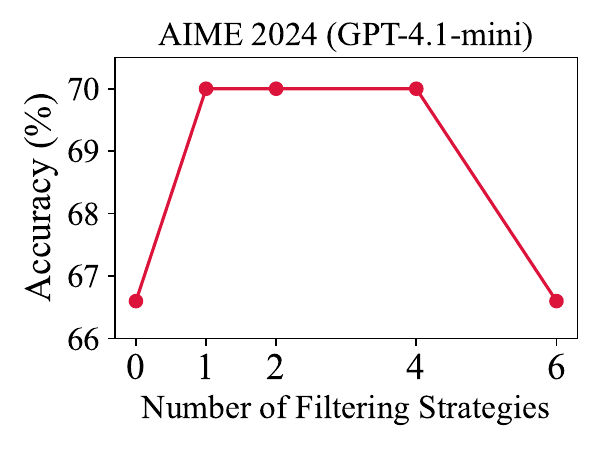}\hfill
        \includegraphics[width=0.33\linewidth]{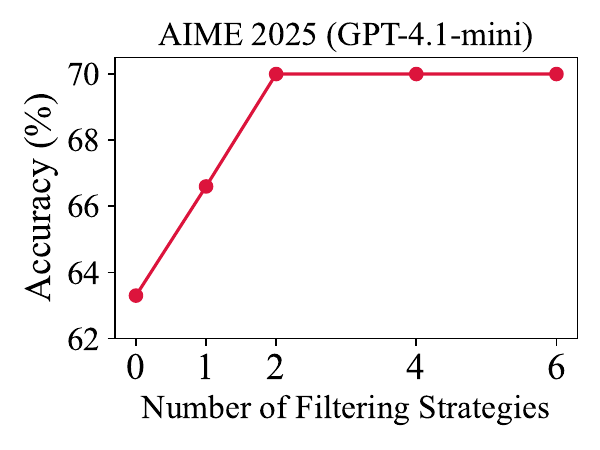}
                            \vspace{-4pt}
        \captionof{figure}{Effect of the filtering strategy number $n$ on the performance of Uniform-C (F), evaluated on GPT-4o-mini (top row) and GPT-4.1-mini (bottom row) across all datasets.}
        \label{fig:par}
    \end{center}%
    \vspace{-12pt}
\end{figure}

\subsection{Effectiveness of Filtering Strategies}

We further examine the role of the filtering strategy, where $n$ denotes the number of strategies removed after ranking all strategies by their entropy. Specifically, the $n$ strategies with the highest entropy values are discarded. To evaluate this effect, we adopt Uniform-C (F) and conduct experiments on both GPT-4o-mini and GPT-4.1-mini. For each dataset, 16 strategies are generated under GPT-4o-mini and 8 strategies under GPT-4.1-mini, with four reasoning paths sampled per strategy.

The results, shown in Figure~\ref{fig:par}, exhibit consistent patterns across datasets. For GPT-4o-mini, performance improves when a moderate number of high-entropy strategies are removed, peaking at around 4–8 discarded strategies. However, overly aggressive filtering degrades performance by inadvertently eliminating some low-complexity strategies and reducing overall path diversity. In contrast, GPT-4.1-mini shows markedly lower sensitivity to $n$, with performance remaining relatively stable across a broader range of values. This indicates that stronger models produce more stable and reliable reasoning paths, making them inherently less dependent on entropy-based filtering.

Overall, these findings highlight that entropy-based filtering plays a more critical role in weaker reasoning models (\eg GPT-4o-mini), where eliminating unstable strategies substantially improves robustness. For stronger models (\eg GPT-4.1-mini), the framework is more stable and less sensitive to the choice of $n$, underscoring the general applicability of TTS-Uniform across different LLMs.

\subsection{Ideal v.s. Actual Token Consumption}
\label{sec:ideal}

To illustrate the discrepancy between theoretical reasoning and actual model generations, we compare the minimal token requirement with the token consumption of LLM-generated solutions on several mathematical problems. The minimal token solution represents the intrinsic complexity of a strategy, \ie the least number of steps required to derive the correct answer. In contrast, the actual generations often include redundant checks, explanatory phrases, and repeated steps, leading to much higher token usage.
In particular, the theoretical minimal-token reasoning is obtained using the stronger OpenAI-o3 model through carefully designed prompting with multiple selections, whereas the actual LLM-generated reasoning is produced directly by GPT-4o-mini.

As shown in the two brown boxes below, the gap between the two is substantial. For example, in the marble problem, the minimal requirement is only 35 tokens, whereas the model output consumes 148 tokens. In the discount problem, the minimal solution requires 46 tokens, but the generated reasoning expands to 181 tokens. These results consistently reveal that actual token consumption in LLM reasoning far exceeds the theoretical minimum.

This observation directly supports our claim that using average token length of generated reasoning paths as a proxy for strategy complexity is unreliable. Since LLMs systematically produce outputs that deviate from the true minimum due to repetition and redundancy, average token length overestimates complexity and fails to capture the intrinsic difficulty of a strategy.

\begin{center}
\begin{tcolorbox}[colback=brown!5!white,colframe=brown!60!black,width=1.0\textwidth,title={Ideal v.s. Actual Token Consumption}]
{
{
\small{
\texttt{\textbf{Given Problem $\mathcal Q$}: 20 marbles were pulled out of a bag of only white marbles, painted black, and then put back in. Then, another 20 marbles were pulled out, of which 1 was black, after which they were all returned to the bag. If the percentage of black marbles pulled out the second time represents their percentage in the bag, how many marbles in total Q does the bag currently hold? Answer Choices: (A) 40 (B) 200 (C) 380 (D) 400 (E) 3200. 
\\ \\
\textbf{\textit{(Minimal Token Requirement)}} Step 1: Bag has 20 black marbles. Probability black = $20/Q$.
Step 2: In 20 draws, 1 black -> proportion = $1/20$.  
Step 3: Equation: $20/Q=1/20$. Solve $\Rightarrow$
 $Q=400$.  
Answer: $\boxed{400}$. \textbf{(Token Count = 35)}
\\ \\
\textbf{\textit{(Actual Token Usage)}} Step 1: The bag initially has only white marbles, but 20 were painted black and returned, so there are exactly 20 black marbles in the bag, making the probability of drawing a black marble equal to $\frac{20}{Q}$, where $Q$ is the total number of marbles currently in the bag.
Step 2: In a second round of drawing 20 marbles, exactly 1 black marble was observed, hence the observed proportion of black marbles in this draw is $\frac{1}{20}$, which is assumed to reflect the true proportion in the bag.
Step 3: Equating the theoretical and observed probabilities gives the equation $\frac{20}{Q} = \frac{1}{20}$; solving this equation by cross-multiplication yields $Q = 400$.
Answer: The total number of marbles in the bag is $\boxed{400}$. \textbf{(Token Count = 148)}
} 
}
}
}
\end{tcolorbox}
\label{ideal_1}
\end{center}

\begin{center}
\begin{tcolorbox}[colback=brown!5!white,colframe=brown!60!black,width=1.0\textwidth,title={Ideal v.s. Actual Token Consumption}]
{
{
\small{
\texttt{\textbf{Given Problem $\mathcal Q$}: The original price of an item is discounted 22\%. A customer buys the item at this discounted price using a \$20-off coupon. There is no tax on the item, and this was the only item the customer bought. If the customer paid \$1.90 more than half the original price of the item, what was the original price of the item? Answers: (A) \$61 (B) \$65 (C) \$67.40 (D) \$70 (E) \$78.20.
\\ \\
\textbf{\textit{(Minimal Token Requirement)}} Step 1: Let original price $P$. Discounted price = $0.78P$.
Step 2: After coupon: $0.78P-20$.
Step 3: Condition: $0.78P-20=\tfrac{1}{2}P+1.90$.
Step 4: Solve: $0.78P-0.5P=21.90$ $\Rightarrow$
 $0.28P=21.90$ $\Rightarrow$
 $P=78.21$.
Answer: $\boxed{78.20}$ (E). \textbf{(Token Count = 46)}
\\ \\
\textbf{\textit{(Actual Token Usage)}} Step 1: Let the original price be denoted as $P$; applying a 22\% discount results in a discounted price of $0.78P$, since $100\%-22\%=78\%$, and $78\% = 0.78$ in decimal form.
Step 2: After applying the additional coupon worth \$20, the new effective price becomes $0.78P - 20$, subtracting the coupon value directly from the discounted price.
Step 3: According to the condition provided in the problem, this final price equals half the original price plus an additional amount of \$1.90, so we set up the equation $0.78P - 20 = \frac{1}{2}P + 1.90$.
Step 4: To isolate $P$, subtract $\frac{1}{2}P$ from both sides: $0.78P - \frac{1}{2}P - 20 = 1.90$, which simplifies to $0.28P = 21.90$, and solving this gives $P = \frac{21.90}{0.28} = 78.21$.
Answer: Rounding to the nearest cent as needed, the original price is approximately $\boxed{78.20}$, corresponding to option (E). \textbf{(Token Count = 181)}
} 
}
}
}
\end{tcolorbox}
\label{ideal_2}
\end{center}

\begin{center}
\begin{tcolorbox}[colback=white!5!white,colframe=blue!55!black,width=1.0\textwidth,title={Example Problem A: Different Reasoning Strategies}]
\small
\textbf{Given Problem $\mathcal Q$.} Find all real \(c\) such that the line \(y=2x+c\) intersects the parabola \(y=x^{2}\) at two points \(A,B\) with \(AB=\sqrt{20}\).

\medskip
\textbf{\textit{(Algebraic Strategy $\mathcal R_1$)}} 
\begin{itemize}
\item \textbf{Step 1 (Intersections).} Solving \(x^2=2x+c\) gives the quadratic
\[
x^{2}-2x-c=0,
\]
with roots \(x_1,x_2\). Two distinct intersection points require the discriminant \(\Delta=4+4c>0\), \ie \(c>-1\).

\item \textbf{Step 2 (Vieta).} 
\[
x_1+x_2=2,\qquad x_1x_2=-c.
\]

\item \textbf{Step 3 (Chord length).} The points are \(A=(x_1,x_1^2)\), \(B=(x_2,x_2^2)\). Hence
\[
AB^{2}=(x_2-x_1)^2+(x_2^2-x_1^2)^2
=(x_2-x_1)^2\Bigl[1+(x_1+x_2)^2\Bigr].
\]
Using \((x_2-x_1)^2=(x_1+x_2)^2-4x_1x_2\) and Vieta,
\[
(x_2-x_1)^2=(2)^2-4(-c)=4+4c,\qquad 1+(x_1+x_2)^2=1+4=5,
\]
so
\[
AB^2=(4+4c)\cdot 5=20(1+c).
\]

\item \textbf{Step 4 (Impose $AB=\sqrt{20}$).} 
\[
AB^2=20\;\Longrightarrow\;20(1+c)=20\;\Longrightarrow\;c=0.
\]
This satisfies \(c>-1\), so it is valid. For \(c=0\), the intersections are \(x=0,2\), \ie \(A=(0,0)\), \(B=(2,4)\), and indeed 
\[
AB=\sqrt{(2-0)^2+(4-0)^2}=\sqrt{4+16}=\sqrt{20}.
\]
\end{itemize}

\medskip
\textbf{\textit{(Geometric Strategy $\mathcal R_2$)}} 
\begin{itemize}
\item \textbf{Step 1 (Symmetry parametrization).} Any chord of \(y=x^2\) with slope \(2\) has endpoints symmetric about \(x=1\). Let
\[
A=(1-t,(1-t)^2),\qquad B=(1+t,(1+t)^2)\quad (t>0).
\]
Then 
\[
\text{slope}(AB)=\frac{(1+t)^2-(1-t)^2}{(1+t)-(1-t)}=\frac{4t}{2t}=2.
\]

\item \textbf{Step 2 (Chord length).} 
\[
AB^{2}=\bigl((1+t)-(1-t)\bigr)^2+\bigl((1+t)^2-(1-t)^2\bigr)^2=(2t)^2+(4t)^2=20t^2.
\]
Set \(AB=\sqrt{20}\Rightarrow 20t^2=20\Rightarrow t=1\), hence the endpoints are
\[
A=(0,0),\qquad B=(2,4).
\]

\item \textbf{Step 3 (Recover \(c\)).} Since the line through \(A,B\) has slope \(2\), it is \(y=2x+c\). Plugging \(A=(0,0)\) gives \(0=0+c\Rightarrow c=0\).
\end{itemize}

\medskip
\textbf{Conclusion.} The unique value is \(\boxed{c=0}\). (Note: the “two points” condition requires \(c>-1\); \(\,c=0\) satisfies this and yields the desired chord length.)
\end{tcolorbox}
\end{center}

\begin{center}
\begin{tcolorbox}[colback=white!5!white,colframe=blue!55!black,width=1.0\textwidth,title={Example Problem B: Different Reasoning Strategies}]
\small
\textbf{Given Problem $\mathcal Q$.} Evaluate
\[
S_n=\sum_{k=0}^{n} k\binom{n}{k}\quad \text{for } n\ge 1.
\]

\medskip
\textbf{\textit{(Combinatorial Strategy $\mathcal R_1$)}}
\begin{itemize}
\item \textbf{Step 1 (Combinatorial model).} Consider all pairs \((S,i)\) where \(S\subseteq [n]\) is a subset and \(i\in S\) is a distinguished (marked) element.
\item \textbf{Step 2 (Count in two ways).}
\begin{enumerate}
\item \emph{First way (by subset size).} Fix \(k=|S|\). Choose \(S\) in \(\binom{n}{k}\) ways, then choose the marked element \(i\) inside \(S\) in \(k\) ways. Total over all \(k\):
\[
\#\{(S,i)\}=\sum_{k=0}^{n} k\binom{n}{k}=S_n.
\]
\item \emph{Second way (by the marked element first).} Choose the marked element \(i\) first: \(n\) choices. The remaining elements \([n]\setminus\{i\}\) may be included or not independently, giving \(2^{\,n-1}\) choices for \(S\) containing \(i\). Thus
\[
\#\{(S,i)\}=n\cdot 2^{\,n-1}.
\]
\end{enumerate}
\item \textbf{Step 3 (Equate the counts).} Therefore \(S_n=n\,2^{\,n-1}\).
\end{itemize}

\medskip
\textbf{\textit{(Generating-Function Strategy $\mathcal R_2$)}}
\begin{itemize}
\item Using \((1+x)^n=\sum_{k=0}^{n}\binom{n}{k}x^k\), differentiate:
\[
n(1+x)^{n-1}=\sum_{k=0}^{n} k\binom{n}{k}x^{k-1}.
\]
Multiply both sides by \(x\) and set \(x=1\):
\[
\sum_{k=0}^{n} k\binom{n}{k}=n\cdot 1\cdot (1+1)^{n-1}=n\,2^{\,n-1}.
\]
\end{itemize}

\medskip
\textbf{Conclusion.}\quad \(\displaystyle \boxed{\,\sum_{k=0}^{n} k\binom{n}{k}=n\,2^{\,n-1}\,}\).
\end{tcolorbox}
\end{center}

\begin{tcolorbox}[colback=violet!5!white,colframe=violet!60!black,width=1.0\textwidth,title={Example A (Coarse-Grained): Algebraic vs Geometric}]
\label{ex:coarse_parabola}
\textbf{Given Problem $\mathcal Q$.} Find all real \(c\) such that the line \(y=2x+c\) meets the parabola \(y=x^2\) at two points \(A,B\) with \(|AB|=\sqrt{20}\).

\textbf{Strategy set} \(\mathcal S=\{\mathcal R_{\text{alg}},\mathcal R_{\text{geo}}\}\).

\Strategy{$\mathcal R_{\text{alg}}$ — Algebraic (Vieta + identities)}{
\begin{enumerate}[leftmargin=*,itemsep=2pt]
\item Intersections satisfy \(x^2=2x+c \Rightarrow x^2-2x-c=0\) with roots \(x_1,x_2\).
\item \(A(x_1,x_1^2), B(x_2,x_2^2)\). Then
\[
|AB|^2=(x_2-x_1)^2+(x_2^2-x_1^2)^2=(x_2-x_1)^2\bigl[1+(x_1+x_2)^2\bigr].
\]
By Vieta, \(x_1+x_2=2\Rightarrow |AB|^2=5(x_2-x_1)^2\).
\item Given \(|AB|^2=20\Rightarrow (x_2-x_1)^2=4\).
For \(x^2-2x-c=0\), \((x_2-x_1)^2=4+4c\). Hence \(4+4c=4\Rightarrow c=0\).
\end{enumerate}
}

\Strategy{$\mathcal R_{\text{geo}}$ — Geometric/parametric (completed square)}{
\begin{enumerate}[leftmargin=*,itemsep=2pt]
\item \((x-1)^2=1+c\Rightarrow x=1\pm\sqrt{1+c}\).
\item \(\Delta x=2\sqrt{1+c}\), \(\Delta y=(x_2^2-x_1^2)=(x_2-x_1)(x_2+x_1)=2\sqrt{1+c}\cdot2=4\sqrt{1+c}\).
\item \(|AB|=\sqrt{(2\sqrt{1+c})^2+(4\sqrt{1+c})^2}=\sqrt{20(1+c)}\).
Set \(|AB|=\sqrt{20}\Rightarrow 1+c=1\Rightarrow c=0\).
\end{enumerate}
}

\EquivNote{This example naturally forms \emph{coarse-grained} classes:
\(\{\mathcal R_{\text{alg}}\}\) (algebraic) vs. \(\{\mathcal R_{\text{geo}}\}\) (geometric).
Local step choices within each class (e.g., discriminant vs. Vieta; parameterization variants) are considered equivalent under the same high-level reasoning logic.}

\textbf{Answer.} \(c=0\).
\end{tcolorbox}

\begin{tcolorbox}[colback=violet!5!white,colframe=violet!60!black,width=1.0\textwidth,title={Example B (Fine-Grained): Two Step-Level Variants of Radical Elimination}]
\label{ex:fine_radical}
\textbf{Given Problem $\mathcal Q$.} Solve
\[
\sqrt{x+9}+\sqrt{25-x}=8,\qquad x\in[-9,25].
\]

\textbf{Coarse view.} Both workable methods belong to the same high-level strategy:
“algebraic elimination of radicals.” Coarse-grained equivalence would merge them.

\textbf{Fine-grained strategies} \(\mathcal S=\{\mathcal R_{\text{sq-expand}},\mathcal R_{\text{isolate}}\}\).

\Strategy{$\mathcal R_{\text{sq-expand}}$ — Square the sum, then square again}{
\begin{enumerate}[leftmargin=*,itemsep=2pt]
\item Square both sides:
\((\sqrt{x+9}+\sqrt{25-x})^2=64\Rightarrow x+9+25-x+2\sqrt{(x+9)(25-x)}=64\).
Thus \(\sqrt{(x+9)(25-x)}=15\).
\item Square again:
\((x+9)(25-x)=225\Rightarrow -x^2+16x=0\Rightarrow x\in\{0,16\}\).
\item Check in the original equation: both \(x=0,16\) satisfy it.
\end{enumerate}
}

\Strategy{$\mathcal R_{\text{isolate}}$ — Isolate one radical, square twice}{
\begin{enumerate}[leftmargin=*,itemsep=2pt]
\item \(\sqrt{x+9}=8-\sqrt{25-x}\) (RHS nonnegative holds on the domain).
\item Square:
\(x+9=64-16\sqrt{25-x}+25-x\Rightarrow 2x-80=-16\sqrt{25-x}\).
\item Hence \(\sqrt{25-x}=(40-x)/8\). Square again:
\(25-x=(40-x)^2/64\Rightarrow x(x-16)=0\Rightarrow x\in\{0,16\}\).
\item Check: both \(x=0,16\) are valid.
\end{enumerate}
}

\EquivNote{Under \emph{fine-grained equivalence}, 
\(\mathcal R_{\text{sq-expand}}\) and \(\mathcal R_{\text{isolate}}\) are distinct due to the concrete step sequences (square-sum-first vs. isolate-then-square), despite sharing the same coarse algebraic nature.}

\textbf{Answer.} \(x\in\{0,16\}\).
\end{tcolorbox}

\end{document}